\documentclass{article}

\usepackage[preprint]{neurips_2026}

\usepackage[utf8]{inputenc} 
\usepackage[T1]{fontenc}    
\usepackage[pagebackref]{hyperref} 
\usepackage{url}
\usepackage{booktabs}       
\usepackage{amsfonts}       
\usepackage{nicefrac}       
\usepackage{microtype}      
\usepackage{xcolor}         
\usepackage{graphicx}
\usepackage{tcolorbox}
\usepackage{capt-of} 
\usepackage{subcaption}
\usepackage{enumitem}
\usepackage{amsmath}
\usepackage{amssymb}
\usepackage{mathtools}
\usepackage{amsthm}

\hypersetup{
    colorlinks,
    linkcolor=cyan,
    citecolor=cyan,
    urlcolor=cyan,
    filecolor=cyan,
}

\usepackage{neutral}
\usepackage{wrapfig}
\definecolor{darkgreen}{rgb}{0.0, 0.5, 0.0}
\definecolor{darkred}{rgb}{0.5, 0.0, 0.0}
\definecolor{darkgrey}{rgb}{0.5, 0.5, 0.5}
\usepackage{colortbl}
\definecolor{forestgreen}{RGB}{34,139,34}

\newcommand{\dgs}[1]{{\scriptsize{\textcolor{darkgreen}{(#1)}}}}

\newcommand{\rgs}[1]{{\scriptsize{\textcolor{darkred}{(#1)}}}}

\newcommand{\ggs}[1]{{\scriptsize{\textcolor{darkgrey}{(#1)}}}}

\usepackage{booktabs}
\usepackage{multirow}

\tcbset{boxsep=0mm,boxrule=0pt,colframe=white,arc=0mm,left=0.5mm,right=0.5mm}
\DeclareRobustCommand{\mybox}[2][gray!10]{%
	\begin{tcolorbox}[   %
		left=0pt,
		right=0pt,
		top=0pt,
		bottom=0pt,
		colback=#1,
		colframe=#1,
		enlarge left by=0mm,
		boxsep=5pt,
		arc=0pt,outer arc=0pt,
		]
		#2
	\end{tcolorbox}
}

\usepackage{url}
\usepackage{amsmath}
\usepackage{mathtools}
\usepackage{amsthm}
\usepackage{thmtools}
\usepackage{thm-restate}
\usepackage{cancel}
\usepackage{amsthm}
\usepackage{amssymb}
\newtheorem{proposition}{Proposition}

\newtheorem{lemma}{Lemma}

\newtheoremstyle{theoremdd}
  {\topsep}
  {\topsep}
  {\itshape}
  {0pt}
  {\bfseries}
  {. }
  { }
  {\thmname{#1}\thmnumber{ #2}\textnormal{\thmnote{ (#3)}}}
\theoremstyle{theoremdd}

\title{Towards Understanding Self-Pretraining\\ for Sequence Classification}

\author{%
Omar Coser\thanks{Unit of Artificial Intelligence \& Computer Systems, Università Campus Bio-Medico di Roma, Via Álvaro del Portillo, 21, Rome, 00128, Italy. Unit of Advanced Robotics and Human-Centered Technologies, Università Campus Bio-Medico di Roma, Via Álvaro del Portillo, 21, Rome, 00128, Italy. Corresponding author: \texttt{omarcoser10@gmail.com}.}
\And
Loredana Zollo\thanks{Unit of Advanced Robotics and Human-Centered Technologies, Università Campus Bio-Medico di Roma, Via Álvaro del Portillo, 21, Rome, 00128, Italy.}
\And
Paolo Soda\thanks{Unit of Artificial Intelligence \& Computer Systems, Università Campus Bio-Medico di Roma, Via Álvaro del Portillo, 21, Rome, 00128, Italy. Department of Diagnostics and Intervention, Radiation Physics, Biomedical Engineering, Umeå University, Universitetstorget, 4, Umeå, 40196, Sweden.}
\And
Antonio Orvieto\thanks{Max Planck Institute for Intelligent Systems. ELLIS Institute Tübingen. Tübingen AI Center.}
}

\begin{document}

\maketitle

\begin{abstract}
Amos et al.~(2024) showed that the accuracy of Transformer models in sequence classification can be significantly improved by first pretraining with a masked token prediction objective \textit{without external data or augmentation}, a procedure referred to as self-pretraining (SPT). While the primary objective of Amos et al. (2024) was to showcase that Transformers can achieve strong performance on the Long-Range Arena (LRA), their pipeline raises more fundamental questions: How does SPT drive optimization to better solutions? Why can standard supervised training fail in Transformers? To better understand this, we replicate and systematically ablate the findings of Amos et al.~(2024). Our ablations suggest that a central bottleneck in the studied settings is not depth or generalization alone, but the ability of label supervision to learn useful query-key Attention patterns from random initialization. With a minimal setup, we identify learning proximity interactions -- turning absolute positional encodings into proximity-biased Attention scores -- as a key source of the improvements brought by SPT. Finally, in a simplified theoretical setup, we show that label supervision can be locally blind to certain Attention-score directions that are instead detectable through masked reconstruction.
\end{abstract}

\vspace{-3mm}
\section{Introduction}
\vspace{-1mm}
The Long-range Arena~\citep{tay2020efficient} has played an important role in the development of new efficient token-mixing strategies over the last few years. A prime example is S4~\citep{gu2022efficiently}, the first state-space models (SSMs), which drew attention after surpassing Transformers~\citep{vaswani2017attention} on the LRA, achieving a $20\%$ average increase in accuracy. S4 and its later variants, many of which were developed and evaluated using the LRA benchmark~\citep{smith2023simplified, gu2022parameterization, poli2023hyena, orvieto2023resurrecting}, laid the groundwork for modern architectures such as Mamba~\citep{gu2024mamba,dao2024transformers}, GLA~\citep{yang2024gated}, RWKV~\citep{peng2024eagle,peng2025rwkv} and (Gated) DeltaNet~\citep{yang2024parallelizing, yang2024gated} among others. While the latest developments are mainly inspired by language modeling performance~\citep{waleffe2024empirical, poli2024mechanistic}, researchers have maintained interest in explaining and studying the gap between Transformers and SSMs on LRA, see e.g.~\citet{zimerman2023long,alonso2025state, qin2024hgrn2}.

In their seminal contribution, awarded the \textit{Outstanding Paper Award at ICLR 2024}, \citet{amos2023never} showed that training Transformer models from scratch~(as done in the LRA benchmark) leads to an under-estimation of their performance and demonstrates that dramatic gains can be achieved with a pretraining $\to$ finetuning setup~(see Tb.~\ref{tb:reproduce}). At first glance, this finding may not seem too surprising -- yet, \textbf{pretraining is performed here on the same data}~(self-pretraining, denoted as \textbf{SPT}), that is: for a fixed task, the Transformer is first asked to learn to predict masked tokens (or the next token) in the sequence, and only later is trained on task labels. This procedure differs substantially from the usual pretraining paradigm~(a large, heterogeneous pretraining corpus and a small, specialized fine-tuning dataset, e.g., \citet{brown2020language}). 

However, the idea of SPT~(Fig.~\ref{fig:SPT}) is not attributable to~\citet{amos2023never}: it can already be found in~\citet{el2021large}, who showed the efficacy of a self-pretraining pipeline in vision. Self-pretraining as a new paradigm then appeared in~\citet{peng2023improving}, showing that SPT models achieve competitive performance on downstream speaker verification tasks with only one-third of the data compared to standard pretraining. Concurrently, \citet{zhou2023self} demonstrated that self-pretraining improves performance across diverse medical image tasks. Further, \citet{krishna2023downstream} demonstrated that pretraining language models directly on downstream datasets could match performance after pretraining on massive external data, highlighting that much of the observed \textit{performance gains may be driven by the pretraining objective itself rather than the data volume and its heterogeneity}.

\begin{figure}[t]
    \vspace{-3mm}
    \centering
\includegraphics[width=0.85\linewidth]{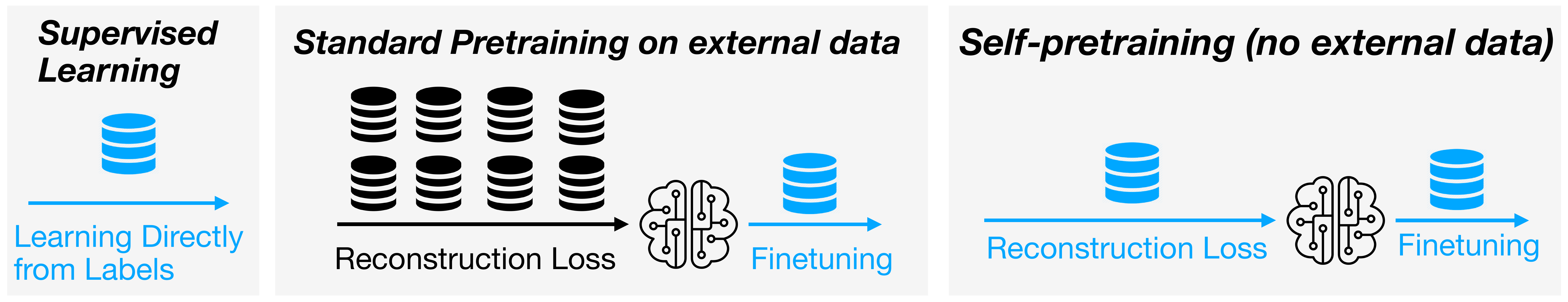}
    \caption{Summary of training pipelines for classification. SPT is the method we study in this work.}
    \label{fig:SPT}
    \vspace{-0mm}
\end{figure}

\begin{table*}[h]
\vspace{-2mm}
\caption{\textbf{Replication of the results} of~\citet{amos2023never}. We use their codebase and similar training settings, see \S~\ref{sec:pre}. We find good agreement with their results and hyperparameters. The numbers reported always refer to final peak held-out validation accuracy, with the values in {\color{darkgreen}green} parentheses referring to the absolute performance increase with respect to the corresponding from-scratch result.}
\label{tb:reproduce}
\vspace{-1mm}
\begin{center}
\begin{small}
\begin{sc}
\resizebox{0.95\textwidth}{!}{%
\begin{tabular}{llllll}
\toprule
\textbf{Training Modality} & ListOps & CIFAR10& PathFinder &Retrieval &Text \\
\midrule
 {\scriptsize Standard (from scratch)}& 0.402 &0.499 &0.67 & 0.776 & 0.653\\
 {\scriptsize +SPT~(Amos et al.) } &0.59 \dgs{+0.19} &0.74 \dgs{+0.24} & 0.88 \dgs{+0.21} &0.88 \dgs{+0.11}&0.89 \dgs{+0.24} \\
{\scriptsize +SPT~(Our reproduction) } & 0.56 \dgs{+0.16}& 0.72 \dgs{+0.22}& 0.85 \dgs{+0.18} &0.88 \dgs{+0.11} &0.89 \dgs{+0.24}\\
\bottomrule
\end{tabular}
}
\end{sc}
\end{small}
\end{center}
\vspace{-2mm}
\end{table*}

Compared to the applied research cited above, the objective of~\citet{amos2023never} was not to show how to reach state-of-the-art performance or to eliminate the need for pretraining on a massive external dataset. Instead, they show that on challenging tasks such as the LRA, \textbf{Transformers, in contrast to other models such as SSMs, \textit{critically} benefit from data-informed initialization in supervised learning}. This is a valuable practical insight; it \textbf{hints at a potential difficulty in learning Attention masks from labels alone}, an interesting phenomenon that warrants further investigation.


To shed light on the issue, we first pose the following questions that isolate depth, data source, and optimization effects. We mostly explore these on the LRA, as it is challenging for Attention-based models~\citep{tay2020long,zimerman2023long,amos2023never}.

\vspace{-1mm}
\begin{enumerate}[leftmargin=2em,itemsep=0.05em]
    \item \textit{Role of depth:} Is the boost in validation accuracy observed when including SPT only observable in deep models and after a self-pretraining budget of hundreds of epochs? Can we observe and try to understand this effect in shallow models, perhaps with only one layer?
    \item \textit{Role of data:} Is the self-pretraining boost due to learning data-specific features, or can it also be observed when pretraining e.g. on another~(small-scale) dataset in the same benchmark?
    \item \textit{Role of optimization:} Is poor from-scratch validation performance an optimization issue~(e.g., reconstruction loss allows gradients to flow better), or is it instead the case that both from-scratch training and self-pretraining fit the data, yet SPT leads to better generalization? 
\end{enumerate}
\vspace{-1mm}
Towards answering these questions, we make the following contributions:
\begin{itemize}[leftmargin=2em,itemsep=0.05em]
    \item We replicate the results of~\citet{amos2023never} in Tb.~\ref{tb:reproduce}, and show additional evidence for SPT beyond the LRA in \S~\ref{sec:other_datasets}. We then present ablations regarding pretraining duration~(\S~\ref{sec:duration}), model depth and data source~(\S~\ref{sec:depth}), weight freezing~(\S~\ref{sec:freeze}) and mixed initializations~(\S~\ref{sec:subset}).
    \item Throughout \S~\ref{sec:ablations} we summarize our main findings in ``Takeaway boxes''. Our results show that SPT gains persist across several LRA settings~(various model depths, different data sources and pretraining durations). By re-initializing subsets of model weights to a uniform distribution after self-pretraining and before finetuning, our ablations identify Attention parameters, and especially $W_Q, W_K$ in one-layer models, as major carriers of the SPT gain.
    \item We further investigate the Attention patterns unlocked by SPT. To do that, we construct in \S~\ref{sec:toy} a minimal $1$-layer synthetic example. A careful analysis of this setup reveals that self-pretraining can learn Attention weights $W_Q, W_K$ that convert additive sinusoidal positional encodings into a relative, proximity-biased Attention pattern that finetuning can build on. 
    \item Why can learning specific Attention patterns fail with pure label supervision? In \S~\ref{sec:theory} we provide a theoretical abstraction and analysis in a simplified yet insightful setup.
\end{itemize}
\vspace{-2mm}
We believe our investigation can spark further theoretical discussions of the statistical and optimization properties of this promising new paradigm.

\section{Preliminaries}
\label{sec:pre}

\vspace{-1mm}
\paragraph{Model.} Let $B$ denote the batch size, $L$ the sequence length and $D$ the embedding dimension. Given embedded inputs $X\in\mathbb{R}^{B\times L\times D}$,~\citet{amos2023never} inject absolute positional information once as $Z^{(0)}=X+P$.
The backbone is a pre-norm~\citep{xiong2020layer} residual stack of $T$ blocks, each consisting of an Attention layer followed by a position-wise MLP.
For each block,
$Z' = Z + \mathrm{Drop}(\mathrm{MHA}(\mathrm{LN}(Z)))$ and
$Z^{+} = Z' + \mathrm{Drop}(\mathrm{MLP}(\mathrm{LN}(Z')))$.
Attention is multi-head: for $H$ heads, $d=D/H$, $Q_h=Z(W_Q^{h})^\top,\ K_h=Z(W_K^{h})^\top,\ V_h=Z(W_V^{h})^\top$, the operation is $$\mathrm{MHA}(Z)=\mathrm{Concat}_{h}
\left(
\mathrm{softmax} \Big(\tfrac{Q_hK_h^\top}{\sqrt{d}}\Big)V_h
\right)W_O.$$
For classification, the sequence $Z^{(T)}$ is mean-pooled, and the output vector is used for classification. In self-pretraining, the architecture is exactly the same, yet a fixed fraction $\mathcal{M}$ of the input sequence is masked and the model is trained to reconstruct its original continuous values. $Z^{(T)}$ is projected token-wise and evaluated using $\mathcal{L}_{\mathrm{SPT}}
=
\sum_{b=1}^B \sum_{t\in\mathcal{M}}
\ell\!\left(W_{\mathrm{spt}} Z^{(T)}_{b,t,:},\, X_{b,t,:}\right)$.

\vspace{-3mm}
\paragraph{Training setup.} The LRA consists of 6 classification tasks, where if inputs are not strictly of a sequential nature~(e.g. Image, the black/white version of CIFAR10) they are first flattened to produce a sequence. For a full description of these tasks, please refer to our Appendices~\ref{sec:tasks} and~\ref{sec:exp_details}. We do not consider the PathX dataset, where the full SPT+finetuning pipeline is roughly $16\times$ more expensive than PathFinder. Our LRA conclusions thus apply to the other five tasks, not the full six-task benchmark. Our first step towards a closer inspection of the results by~\citet{amos2023never} is a replication of their findings, for which we use their codebase and the same Transformer settings~(width, depth, heads, etc). We follow~\citet{amos2023never} and consider self-pretraining through a masked\footnote{Following~\citet{amos2023never}, $50\%$ for images, $15\%$ for text tasks, and $10\%$ in ListOps.} sequence modeling objective. After a 200 epochs SPT stage where the model learns to reconstruct dataset-specific inputs~(self-pretraining), we perform finetuning for 100 epochs and compare the results with those of from-scratch training for 100 epochs~(no SPT). We use AdamW~\citep{loshchilov2017fixing} and found the hyperparameters provided by~\citet{amos2023never} to be well-tuned both in the self-pretraining and finetuning stages, and their results align well with our reproduction in Tb.~\ref{tb:reproduce}. 

Following prior LRA work, we use the public held-out split as the validation/evaluation split. Following~\citet{amos2023never}, scalar LRA results report the \textit{peak held-out accuracy} attained within the fixed finetuning budget. For the CIFAR 10 and PathFinder ablations that support our Attention-specific claims, namely the freezing, hybrid-initialization, and one-layer $QK$ experiments, we report mean $\pm$ standard deviation over $5$ random seeds. Results without a $\pm$ value, including the larger LRA sweeps and the ListOps, Retrieval, and Text entries, are single-seed runs due to computational cost and low observed variation in the maximum accuracy over iterations on CIFAR 10 and PathFinder. 

To motivate our study beyond the LRA, we provide evidence of the SPT gap on \textit{additional datasets} in~\S\ref{sec:other_datasets}, covering human activity and physiological signal classification~\citep{dempster2019rocket}.


\section{Ablations}
\label{sec:ablations}
We present here four sets of ablations on the LRA, bringing insights into SPT mechanisms: varying pretraining duration~(\S~\ref{sec:duration}), changing model depth and data source~(\S~\ref{sec:depth}), freezing layers when training from scratch~(\S~\ref{sec:freeze}), and using only a subset of self-pretrained weights~(\S~\ref{sec:subset}).

\begin{figure}[t]
\vspace{-0mm}
    \centering
    \begin{minipage}[t]{0.33\linewidth}
        \centering
        \includegraphics[width=\linewidth]{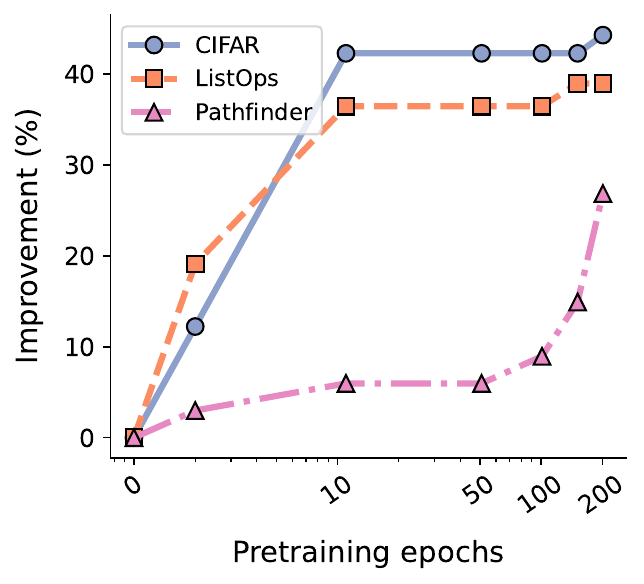}
        \caption{\textbf{SPT duration} ablation. We vary self-pretraining epochs before 100 epochs of finetuning. CIFAR10 and ListOps benefit after only a few epochs, PathFinder needs longer pretraining.}
\label{fig:pt_epochs}
        \label{fig:spt_epochs}
    \end{minipage}
    \hfill
    \begin{minipage}[t]{0.6\linewidth}
        \centering
        \includegraphics[width=\linewidth]{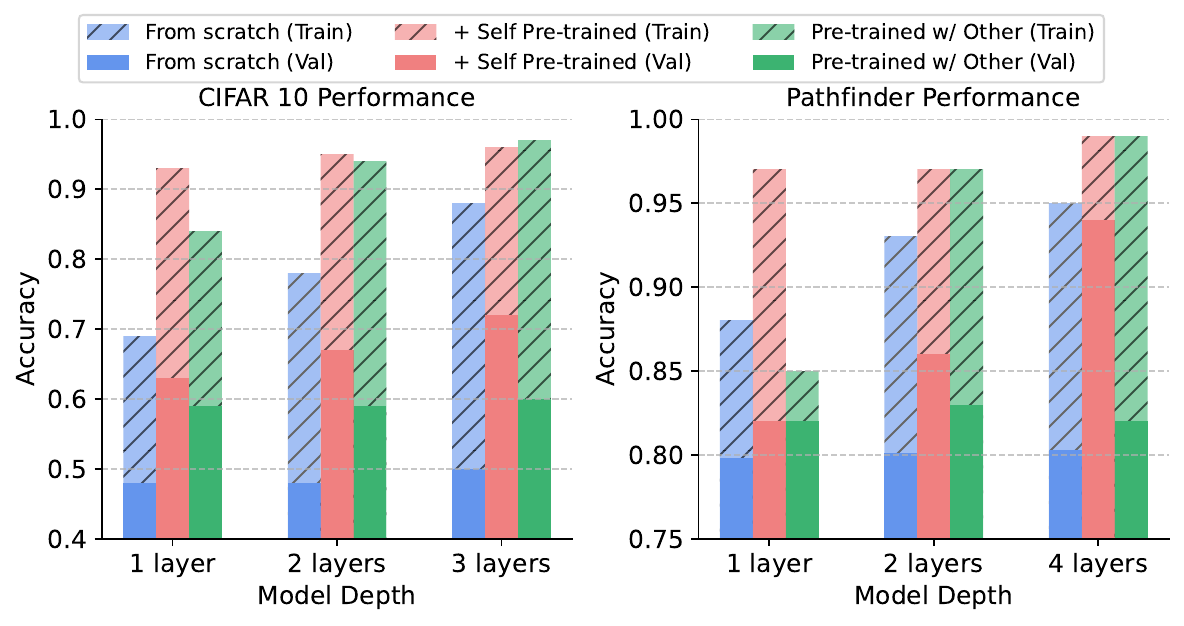}
        \caption{\textbf{Depth and dataset-swap} ablation. We compare training from scratch, same-dataset SPT, and cross-dataset pretraining before finetuning. SPT can improve both train and validation accuracy even for 1-layer models and with swapped pretraining data, pointing to an optimization benefit beyond task-specific representation learning.}
        \label{fig:depth}
    \end{minipage}
\vspace{-1mm}
\end{figure}

\subsection{Self-pretraining Duration}
\label{sec:duration}
\vspace{-2mm}

In our experiments leading to Tb.~\ref{tb:reproduce}, we noticed that pretraining can be computationally expensive\footnote{In CIFAR the wall-clock time for 1 epoch of pretraining on our hardware~(a single A100 with $80$GB of RAM) is 48s, for ListOps is 19.1m and for PathFinder is 3.20min. During finetuning, the time for 1 epoch in CIFAR is 56s, for ListOps is 18.95m and for PathFinder is 3min. Since as~\citet{amos2023never} we first pretrain for 200 epochs and then finetune on 100, the total time compared to basic training from scratch for 100 epochs is tripled.}. Our results in Fig.~\ref{fig:pt_epochs} show that \textit{only 10 epochs} of self-pretraining are needed to significantly boost performance on CIFAR10 and ListOps. Instead, for PathFinder, major gains can only be observed after 100 epochs. Improvements can already be observed after a single epoch. Our hyperparameters are kept the same as in Tb.~\ref{tb:reproduce}, except in the $1$-epoch setting where we reduced the learning rate.

\vspace{-1mm}
\mybox[gray!8]{
\textbf{Takeaway 1:} SPT can yield substantial improvements even after a single epoch of pretraining.}
\subsection{Evidence for an optimization bottleneck (Depth + Dataset Swap)}
\label{sec:depth}

A natural explanation for the observed increase in validation accuracy could be the ability of self-pretraining to learn crucial patterns and \textbf{hierarchical representations} of the data, providing a \textit{better data-distribution-informed initialization} when learning labels~\citep{hubel1962receptive,hinton2006reducing}. While in \S~\ref{sec:duration}, we showed how such representations may arise with just a few epochs of self-pretraining, here we inspect the efficacy of self-pretraining as we vary the \textbf{number of layers} in the model --- testing whether SPT success is contingent on the presence of a hierarchy in representations. At the same time, to test whether self-pretraining is indeed learning task-specific representations, we consider \textbf{swapping pretraining datasets}.\\
For each target dataset $A$, we compare: training from scratch on $A$, SPT on $A$ followed by finetuning on $A$, and pretraining on another dataset $B\ne A$ followed by finetuning on $A$. Results are in Fig.~\ref{fig:depth}. 
 \vspace{-2mm}
 \begin{enumerate}[leftmargin=2em,itemsep=0.1em]
     \item We observe \textbf{gains even when pretraining on a single layer}. While validation accuracy after fine-tuning improves by only $\sim2\%$ after self-pretraining a 1-layer model on PathFinder, on CIFAR10 we observe a $\sim15\%$ increase. Even though the gap in deeper models is larger, especially for PathFinder, we find it interesting that it can be observed at such small scales.
     \item  On both datasets, \textbf{from-scratch validation accuracies are almost independent of the number of layers}. Moreover, there is a substantial gap in training accuracy between from-scratch and SPT performance, pointing to optimization issues.
     \item Some gains in validation accuracy after finetuning can be \textbf{observed when switching data source}. This suggests that SPT may improve trainability by moving the model away from a suboptimal initialization, rather than only by learning task-specific representations.
 \end{enumerate}

 \vspace{-1mm}

\mybox[gray!8]{
\textbf{Takeaway 2:} The SPT boost is not necessarily related to generalization: SPT can greatly improve \textit{training} accuracy after finetuning -- even when models are just $1$-layer deep. This can also hold when pretraining on a  different data distribution from another LRA task. This hints at optimization problems when training Attention-based sequence models directly from labels.}
 
\subsection{Freezing layers: from-scratch sensitivity analysis}
\label{sec:freeze}
In \S~\ref{sec:depth}, we saw that short pretraining, either on task data or on data with a different distribution, can improve training accuracy~(and sometimes, validation accuracy) when learning from labels. \citet{amos2023never} observed that models with different sequence mixers, such as SSMs, can be less sensitive to SPT. This suggests that the trainability issue may be tied to softmax Attention itself. To this end, we consider a drastic experiment: training from scratch after freezing all Attention weights in the architecture. If attention is not effectively learned from labels, then freezing Attention at random initialization should have little effect on performance. This finding would also be consistent with insights from early-stage vision Transformer training~\citep{touvron2021training}, indicating that Transformers are data-hungry and may benefit from additional supervision beyond hard labels. 

In Tb.~\ref{tab:freezing}, training the Attention layers has little effect when training from scratch, except on Pathfinder. In some cases, such as CIFAR-10, freezing Attention even improves performance. By contrast, also freezing the intermediate MLP layers usually decreases performance, as expected.

\begin{table*}[h]
\vspace{-2mm}
\caption{Comparison of different \textbf{freezing configurations} when training deep models \textbf{from scratch}. Reported are val. accuracy results, in the setting described in \S~\ref{sec:pre}. We either clamp only Attention weights or Attention weights and intermediate feedforward layers to random initialization, and learn other parameters from labels. Encoder, normalization weights, and decoder layers are always trained. Entries with $\pm$ report mean $\pm$ standard deviation across 5 seeds; entries without $\pm$ are single-seed.}
\label{tab:freezing}
\begin{center}
\begin{small}
\resizebox{0.9\textwidth}{!}{%
\begin{sc}
\begin{tabular}{l l l l l l}
\toprule
\textbf{Dataset} & \textbf{From Scratch}  & \textbf{with Att. frozen} & \textbf{with Att. \& MLP frozen}\\
\midrule
CIFAR10      & $0.503 \pm 0.002$  & $0.535 \pm 0.004$ \dgs{+0.032}  & $0.468 \pm 0.008$ \rgs{-0.035} \\
PathFinder    & $0.672 \pm 0.003$  & $0.50\pm 0.00$ \rgs{-0.172}& $0.50\pm 0.00$ \rgs{-0.172}\\
ListOps       & 0.40  & 0.40 \ggs{+0.00}& 0.30 \rgs{-0.10}\\
Retrieval     & 0.78  & 0.78 \ggs{+0.00}& 0.67 \rgs{-0.09}\\
Text          & 0.65  & 0.65 \ggs{+0.00}& 0.66 \dgs{+0.01}\\
\bottomrule
\end{tabular}
\end{sc}
}
\end{small}
\end{center}
\vspace{-3mm}
\end{table*}

\mybox[gray!8]{
\textbf{Takeaway 3:} With the exception of PathFinder, from-scratch performance is unaffected by freezing Attention weights to random. This indicates that supervised training makes limited effective use of the Attention parameters in these settings and struggles to combine sequential information.}

\subsection{Using SPT initialization only on a subset of weights}
\label{sec:subset}
\S~\ref{sec:freeze} suggests that the gains from self-pretraining are not uniformly distributed across model components, with the Attention mechanism emerging as particularly important. Motivated by this observation, we study hybrid initializations in which, for each layer, only selected subsets of parameters are initialized from SPT weights, while all remaining parameters are randomly initialized.

The results in Tb.~\ref{tab:subsets} show a clear separation between architectural components. \textbf{Initializing only MLP-related parameters yields little to no improvement over random initialization} (see \textit{MLP+Norm+Encoder}), indicating that self-pretraining provides limited transferable signal to the MLP branch. \textbf{In contrast, initializing the Attention parameters has a pronounced effect} (see \textit{Attention+Norm+Encoder}). In particular, using self-pretrained values solely for the Attention weights $W_K, W_Q, W_V,$ and $W_O$ can already lead to substantial accuracy gains in CIFAR10. While performance close to full self-pretraining is achieved only when Normalization and embedding parameters~(both close in proximity to Attention) are also initialized from self-pretrained weights, these results demonstrate that the Attention block is the primary carrier of the benefits of SPT.

To interpret the remaining gap, we note that Tb.~\ref{tab:subsets} uses multilayer architectures. In this setting, pretrained Attention weights may not transfer when the surrounding weights are reset. The Attention map in layer $\ell$ is determined not only by $W_Q$ and $W_K$, but also by representations entering that layer. Re-initializing embeddings, normalizations, or earlier-layer weights changes this representation, including its scale and distribution. Thus, pretrained $W_Q, W_K$ may no longer produce the Attention patterns they learned during SPT. This mismatch can compound across layers, since a distorted Attention pattern in one layer alters the inputs of the next layer. Therefore, Attention-only initialization can recover part of the SPT benefit, but should not be expected to match full SPT in deeper models. 

Motivated by this explanation and by \S~\ref{sec:depth}, we run a final experiment with \textbf{1-layer models}, where such cross-layer mismatch cannot accumulate. We initialize only the parameters that determine the softmax Attention scores, namely $W_Q$ and $W_K$, from SPT weights. Tb.~\ref{tab:one_layer_qk} confirms our interpretation: in 1-layer CIFAR10/PathFinder models, gains can be attributed to a better initialization of $W_Q, W_K$.

\begin{table}[h]
\vspace{-3mm}
\centering
\caption{Please refer to \S~\ref{sec:pre} for our setup. We report mean and standard deviation across 5 seeds. Models are initialized in a \textbf{hybrid fashion}: we consider, for all layers, initializing to SPT values \underline{\textbf{only}} (1) the Attention weights~($W_K, W_Q, W_V, W_O$) (2) the Attention weights, their LayerNorm parameters, and the initial embedding layer, \dots, (5) Attention and MLP weights, and so on. All weights which are not initialized from SPT are initially random. All weights are then optimized with the same setting as Tb.~\ref{tb:reproduce}. ``No Init'' and ``Full Init'' here correspond to the from scratch and SPT variant in Tb.~\ref{tb:reproduce}, respectively. We show classification accuracy, and notice that substantial gains from random initialization can be obtained by initializing to self-pretrained weights only the parameters in the Attention branch, while specialized initialization of the MLP branch has little to no effect.}
\setlength{\tabcolsep}{5pt}
\renewcommand{\arraystretch}{1.15}
\vspace{0.75em}

\resizebox{0.8\textwidth}{!}{%
\begin{tabular}{llcccccc}
\toprule
\multicolumn{8}{c}{\textbf{Baselines and attention-related initialization}} \\
\midrule
\textbf{Dataset} &
\textbf{Stat.} &
\textbf{None} &
\textbf{Full} &
\textbf{Att} &
\textbf{Att+Norm} &
\textbf{Att+Enc} &
\textbf{Att+Norm+Enc} \\
\midrule
CIFAR10 
& mean
& $0.503$
& \cellcolor{blue!15}$0.721$
& $0.532$
& $0.515$
& $0.578$
& \cellcolor{forestgreen!20}\underline{$0.714$} \\
& std.
& $\pm 0.002$
& $\pm 0.012$
& $\pm 0.004$
& $\pm 0.024$
& $\pm 0.009$
& $\pm 0.002$ \\
\midrule
PathFinder
& mean
& $0.672$
& \cellcolor{blue!15} $0.850$
& $0.686$
& $0.677$
& $0.709$
& \cellcolor{forestgreen!20}\underline{$0.781$} \\
& std.
& $\pm 0.003$
& $\pm 0.004$
& $\pm 0.005$
& $\pm 0.007$
& $\pm 0.009$
& $\pm 0.008$ \\
\bottomrule
\end{tabular}%
}

\vspace{0.75em}

\resizebox{0.8\textwidth}{!}{%
\begin{tabular}{llccccc}
\toprule
\multicolumn{7}{c}{\textbf{MLP-related and mixed initialization}} \\
\midrule
\textbf{Dataset} &
\textbf{Stat.} &
\textbf{Att+MLP} &
\textbf{Att+MLP+Enc} &
\textbf{Att+MLP+Norm} &
\textbf{MLP+Norm} &
\textbf{MLP+Norm+Enc} \\
\midrule
CIFAR10 
& mean
& $0.540$
& $0.503$
& $0.504$
& $0.487$
& \cellcolor{red!20}$0.495$ \\
& std.
& $\pm 0.008$
& $\pm 0.006$
& $\pm 0.011$
& $\pm 0.004$
& $\pm 0.009$ \\
\midrule
PathFinder
& mean
& $0.685$
& $0.703$
& $0.661$
& $0.661$
& \cellcolor{red!20}$0.669$ \\
& std.
& $\pm 0.008$
& $\pm 0.005$
& $\pm 0.010$
& $\pm 0.007$
& $\pm 0.005$ \\
\bottomrule
\end{tabular}%
}
\label{tab:subsets}
\vspace{-0mm}
\end{table}

\begin{table*}[h]
\caption{For one-layer models, SPT improves val. accuracy due to better initialization of $W_Q, W_K$. We report mean and standard deviation across 5 seeds. A full ablation is provided in Tb.~\ref{tab:qk-att-ablation}.}
\label{tab:one_layer_qk}
\centering
\begin{small}
\resizebox{\textwidth}{!}{%
\begin{sc}
\begin{tabular}{l l l l l}
\toprule
\textbf{Dataset} & \textbf{From Scratch} & \textbf{SPT $\to$ Full Model init} & \textbf{SPT $\to$ Only QK init} & \textbf{SPT $\to$ All but not QK init} \\
\midrule
CIFAR10   & $0.486 \pm 0.003$ & \cellcolor{blue!15}$0.603 \pm 0.005$  \dgs{+0.12} & \cellcolor{forestgreen!20}$0.607 \pm 0.012 $ \dgs{+0.12} & \cellcolor{red!20}$0.502 \pm 0.008$ \dgs{+0.016}\\
PathFinder & $0.672 \pm 0.002$ & \cellcolor{blue!15}$0.705 \pm 0.004$\dgs{+0.03} & \cellcolor{forestgreen!20}$0.704 \pm 0.003$ \dgs{+0.03} & \cellcolor{red!20}$0.679  \pm 0.007$ \dgs{+0.008}\\
\bottomrule
\end{tabular}
\end{sc}
}
\end{small}
\end{table*}

\mybox[gray!8]{
\textbf{Takeaway 4:} Downstream benefits can be observed even when self-pretraining is applied to only a subset of parameters. In line with Takeaway~3, we find that retaining self-pretrained initialization in the Attention layers is particularly impactful, in contrast to MLP layers. Once depth-related confounding effects are removed, SPT initialization of $W_Q$ and $W_K$ alone is sufficient to explain the performance gains attributed to self-pretraining.
}

\vspace{-1mm}
\paragraph{How much do weights move?}
To complement the mixed-initialization results, we compare Frobenius displacements between random initialization (R), self-pretraining (SPT), supervised training from scratch (SC), and finetuning (FT) (\S~\ref{sec:weight}). Across CIFAR10 and PathFinder, displacements $R\!\rightarrow\!SC$ are generally much smaller than $R\!\rightarrow\!SPT$ or $R\!\rightarrow\!FT$: SC stays closer to initialization. Fig.~\ref{fig:delta_heatmaps} further shows that from-scratch norm changes are dominated by MLP blocks, while Attention projections move less, especially relative to MLPs on PathFinder. This supports the view that SPT provides a useful Attention structure not recovered by supervised training alone.

\section{Understanding self-pretraining on Attention weights}
\label{sec:toy}

In \S~\ref{sec:ablations}, we ablated the Self-Pretraining~(SPT) paradigm on the LRA. We showed that SPT gains are tied to successful optimization~(Takeaway 2), can be observed even in 1-layer, and are especially strong when SPT initializes the Attention weights~(Takeaway 4, Tb.~\ref{tab:subsets} and Tb.~\ref{tab:one_layer_qk}). When randomly initialized and trained with labels, these weights struggle to establish token relations~(Takeaway 3, Tb.~\ref{tab:freezing}). Here, we move beyond LRA and introduce a minimal synthetic task that exposes the benefits of SPT in a controlled setting. This setup enables smaller models and finer ablations, and unlocks insights into how SPT acts on Attention: we find that SPT reshapes how the model uses additive positional encodings and effectively converts them into a flexible proximity bias. 

\vspace{-3mm}
\paragraph{Task and Data.}
We generate a synthetic binary classification dataset of length-100 sequences whose statistics depend on a binary label. For each label, two anchor points define the region where trajectories concentrate: each datapoint is latently an ellipsoidal trajectory moving back and forth between these anchors. To make the task non-trivial, trajectories vary in speed via random-phase time reparametrization, orbital amplitude, and orientation, and are further corrupted by Gaussian noise and outliers. \S~\ref{sec:synthetic_appendix} describes the task and provides example plots. The dataset contains $100$ balanced training sequences with $15\%$ random label flips, $200$ balanced validation sequences, $400$ balanced test sequences, and $12000$ unlabeled sequences for self-supervised pretraining.

\vspace{-2mm}

\paragraph{Model.} We use a one-layer Attention module to predict the label. The module is much simpler than the one used in \S~\ref{sec:ablations}: two-dimensional trajectories are linearly embedded into 32 dimensions and processed by single-head softmax Attention with weights $W_Q, W_K, W_V, W_O$. There is no bias, no MLP after the Attention block, and no LayerNorm. Crucially, we make use of absolute positional embeddings, added to the encoded input before the Attention block exactly as in \S~\ref{sec:ablations}.

\begin{wrapfigure}[17]{r}{0.4\textwidth}
    \centering
    \vspace{-12mm}
    \includegraphics[width=0.98\linewidth]{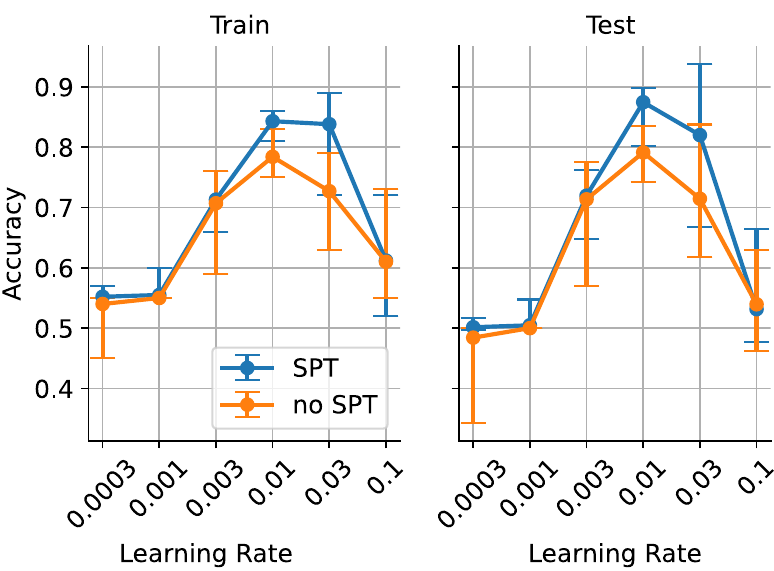}
    \caption{Performance on the toy task (1-layer model) described in \S~\ref{sec:toy}. Train and test accuracies are averaged over 10 random seeds~(max-min interval is shown) for different learning rates. SPT consistently outperforms the no-SPT baseline, achieving higher peak accuracy at intermediate learning rates.}
    \label{fig:toy_main_tuning}
\end{wrapfigure}
\subsection{Results and discussion}
In Fig.~\ref{fig:toy_main_tuning}, we consider 20 epochs of training from labels, preceded by a potential SPT~(10 epochs) using a masked objective~(as in \S~\ref{sec:ablations}). For each learning rate, we gather performance over 10 seeds, and show the average along with the max-min range. \textbf{We observe consistent benefits from self-pretraining in this controlled setting}, with a boost of $\sim 10\%$ in both training and test accuracies at the optimal learning rates. We also provide plots for the training loss and a different epoch budget in \S~\ref{sec:synthetic_appendix}.

\vspace{-3mm}
\paragraph{Attention Matrix.} We proceed to inspect how the Attention matrix evolves from random initialization during pretraining. To do this, we inspect runs associated with the learning rate maximizing the gap in training loss, and then check if similar findings can also extend to the LRA in \S~\ref{sec:back_to_LRA}. In Fig.~\ref{fig:toy_att_evolution}, we see a clear pattern emerging: when feeding a single input into the self-pretrained Attention mechanism, a \textbf{diagonally dominated structure emerges} resembling ALiBi~\citep{press2021train} and thus drawing a direct link to the findings of~\citet{zimerman2023long} on the LRA. Compared to random initialization of $W_Q, W_K$, which produces a near-uniform Attention pattern, as the softmax collapses to a flat distribution~\citet{noci2022signal}, SPT establishes a clear proximity bias: each token attends predominantly to its neighbors.

\begin{figure*}[h]
    \vspace{-1mm}
    \centering
    \includegraphics[width=\linewidth]{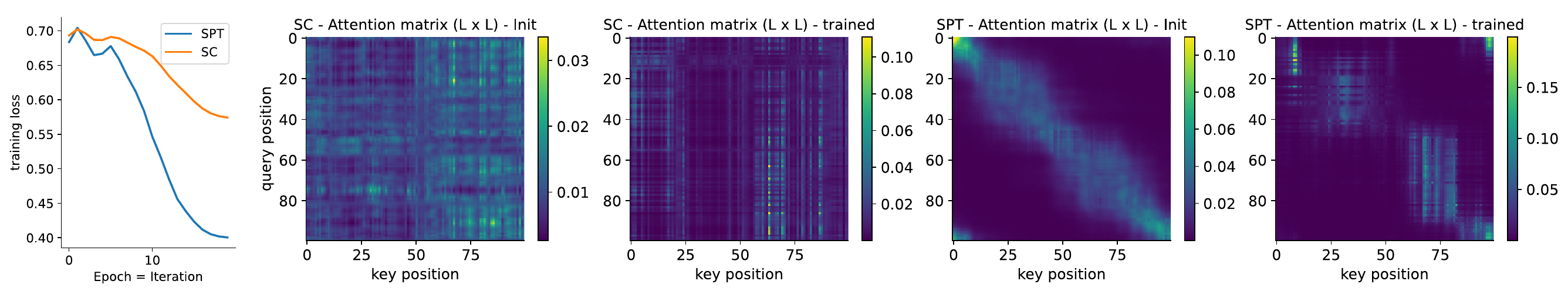}
    \caption{Toy Attention evolution. Left: training loss over iterations for self-pretraining~(SPT) and From-Scratch training~(SC). Right: Attention matrices~($L\times L$) at initialization and after training. SPT rapidly develops structure during pretraining, exhibiting a proximity bias. Compared to random initialization, this structure can develop into a richer sequence mixer after finetuning~(``trained'').}
    \label{fig:toy_att_evolution}
\end{figure*}

\vspace{-3mm}
\paragraph{Crucial role of Positional Encodings.} Motivated by the structure pretraining imposes on the Attention matrix, we inspect how this pattern emerges and its relation to additive positional embeddings. To isolate the role of positional information, we set the input content to zero and feed only positional encodings through the Attention block~(Fig.~\ref{fig:toy_position_undo}). The same diagonal pattern as Fig.~\ref{fig:toy_att_evolution} emerges, indicating a data-independent mechanism. \textbf{SPT Attention weights are in this setting effectively ``undoing'' sinusoidal positional encodings}, building up a structure which instead allows for direct comparisons of nearby tokens already during the first finetuning steps, accelerating convergence.

\begin{figure*}[t]
\vspace{-0mm}
    \centering
    \includegraphics[width=0.95\linewidth]{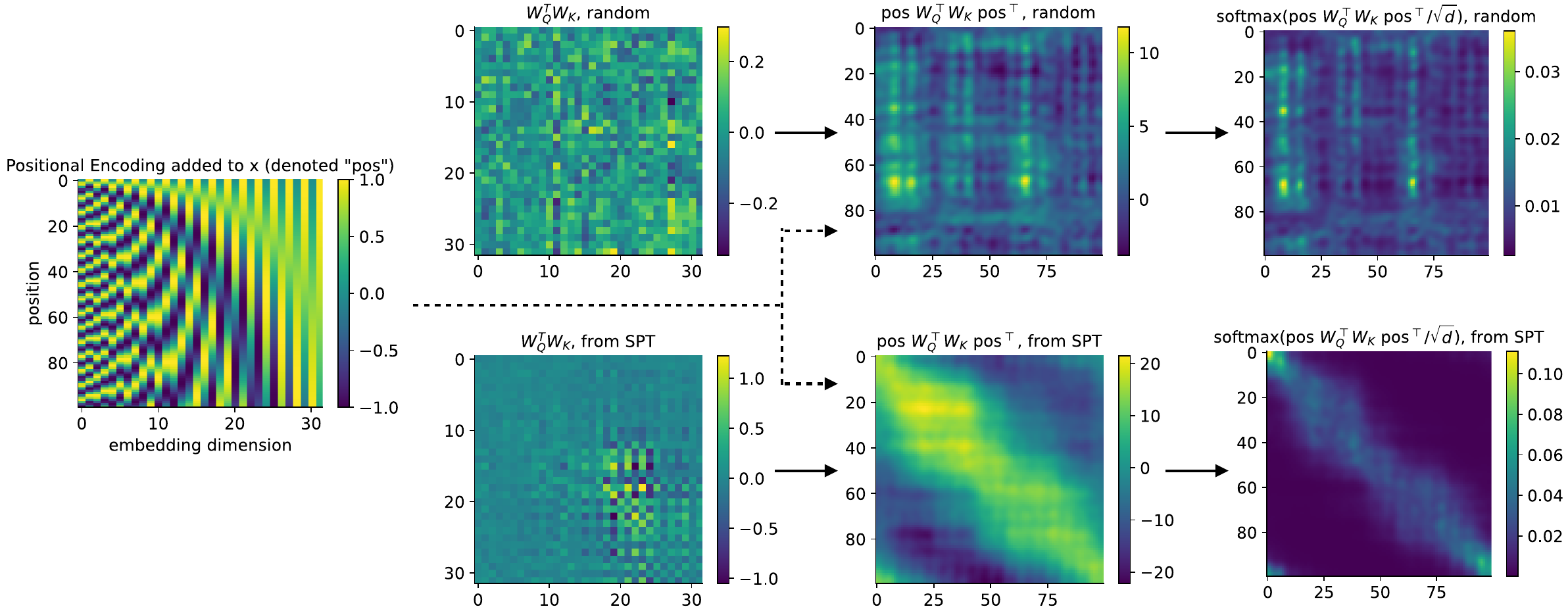}
    \caption{ Visualization of Attention components with random initialization (top) and after SPT training (bottom). While random weights fail to recover positional structure, SPT learns weights that effectively undo positional encoding, producing coherent, position-aligned Attention after softmax. Here we set the input content $X=0$ and feed only positional embeddings $\text{pos}$ through Q/K to isolate the effect of positional information.}
    \label{fig:toy_position_undo}
    \vspace{-3mm}
\end{figure*}

Although the marginal distributions of $W_Q$ and $W_K$ remain close to random initialization in Fig.~\ref{fig:toy_position_undo}, their product $W_Q^\top W_K$ develops a structured interaction with the positional embeddings. As shown in Fig.~\ref{fig:toy_distributions}, the distributions of $W_Q,W_K$ do not exhibit any particular shift from random initialization -- yet their product does, and is able to act on positional embeddings.

\begin{wrapfigure}[24]{r}{0.5\textwidth}
\vspace{-6mm} 
\centering
    \includegraphics[width=\linewidth]{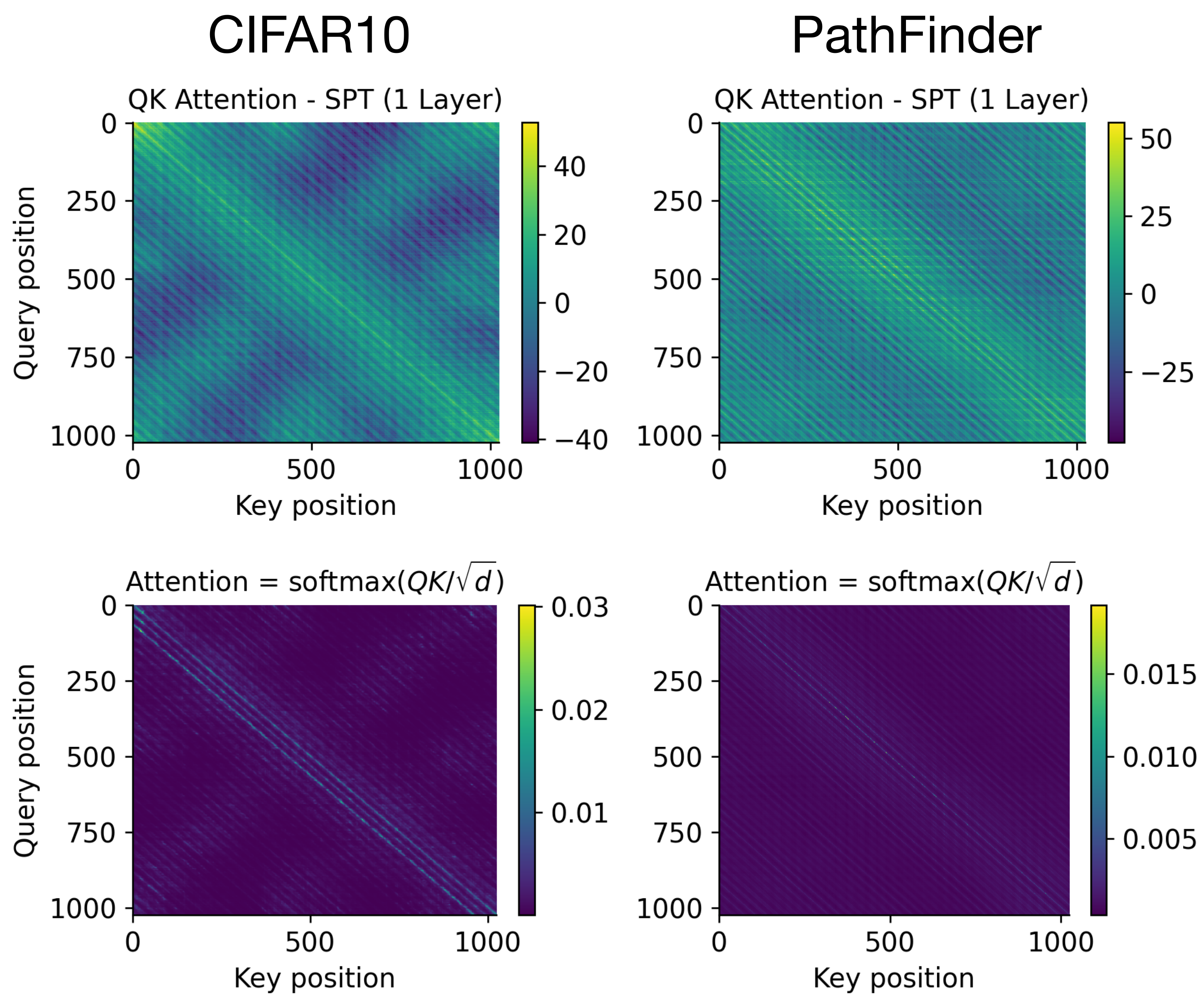}
    \caption{
    Visualization of raw $QK^\top$ (top) and softmax-normalized attention weights (bottom) for a single-layer self-pretrained Transformer. $QK^\top$ matrices show clear diagonal structure, with noisier patterns for CIFAR10 and more coherent structure for PathFinder. Softmax normalization yields sparse, predominantly diagonal attention. Weights are taken after SPT (before finetuning) for the 1-layer setting described in Tb.~\ref{tab:one_layer_qk}.
    }
    \label{fig:att_pattern_cifar_pt}
\end{wrapfigure}

\subsection{Bringing back insights to the LRA}
\label{sec:back_to_LRA}
The toy task in the previous subsection suggests self-pretraining effectively builds a diagonally-dominated Attention structure~(Fig.~\ref{fig:toy_att_evolution}) through a direct action on the positional encodings controlled by the product $W_Q^\top W_K$~(Fig.~\ref{fig:toy_position_undo}). We next ask whether the same pattern appears on the LRA. To do this, we return to the simplest setup in \S~\ref{sec:ablations}, i.e., the one of Tb.~\ref{tab:one_layer_qk}.

\vspace{-3mm}
\paragraph{Attention Structure.} Fig.~\ref{fig:att_pattern_cifar_pt} shows the query-key product and the resulting softmax Attention matrix obtained after feeding a CIFAR10 or a PathFinder example to the self-pretrained network. The resulting pattern is strikingly similar to Fig.~\ref{fig:toy_att_evolution}: pretrained $W_Q, W_K$ establish a diagonal-dominated Attention.

\vspace{-3mm}
\paragraph{Different PE.} At this point, a natural question emerges: rather than relying on SPT to bend $W_Q, W_K$ into producing a proximity bias, why not impose such a bias directly at initialization? 
Instead of using absolute positional encodings, one can inject positional structure directly into Attention: writing $S$ as the softmax argument, ALiBi~\citep{press2021train} sets $S_{ij}\mapsto S_{ij}-m|i-j|$, hard-coding a preference for nearby tokens. FIRE~\citep{li2023functional} learns the shape of the bias $S_{ij}\mapsto S_{ij}+\psi_\theta(|i-j|)$. RoPE~\citep{su2024roformer} instead modifies the dot product itself by rotating queries and keys, making the score depend on relative position $j-i$ without adding an explicit bias. Tb.~\ref{tab:pe-ALiBi} shows that ALiBi can improve from-scratch trainability on CIFAR10, but it does not unlock PathFinder performance, consistent with~\citet{zimerman2023long}. The effect is not specific to ALiBi, suggesting that the bottleneck is not access to the right positional information, but learning the right Attention bias. CIFAR10 can benefit from simple locality priors since nearby visual statistics are preserved after flattening. PathFinder instead requires tracing global connectivity; deciding whether the points are connected in 2D can require much longer-range sequence interactions. A fixed or poorly learned bias can prevent the model from forming the necessary global connections. SPT is the only strategy we found to reliably unlock this regime. We return to why this is the case in the next section.


\begin{table*}[t]
\vspace{-0mm}
\caption{Same setup as Tb.~\ref{tb:reproduce}, ablating the effect of positional encoding variants on self-pretraining accuracy. We report from-scratch (FS) and self-pretraining (SPT) accuracy on a single run.}
\label{tab:pe-ALiBi}
\vspace{-2mm}
\begin{center}
\begin{small}
\resizebox{0.95\textwidth}{!}{%
\begin{sc}
\begin{tabular}{l l c c c c c c c}
\toprule
\textbf{Dataset} & \textbf{Strategy}
& \textbf{PE}
& \textbf{ALiBi}
& \textbf{FIRE}
& \textbf{RoPE}
& \textbf{PE+ALiBi}
& \textbf{PE+FIRE}
& \textbf{RoPE+ALiBi} \\
\midrule
\multirow{2}{*}{CIFAR10}
& FS
& 0.501 & 0.564 & 0.594 & 0.487 & 0.588 & 0.596 & 0.559 \\
& SPT
& 0.722 & 0.575 & 0.746 & 0.741 & 0.754 & 0.760 & 0.759 \\
\midrule
\multirow{2}{*}{PathFinder}
& FS
& 0.671 & 0.500 & 0.500 & 0.679 & 0.678 & 0.674 & 0.661 \\
& SPT
& 0.8506 & 0.500 & 0.500 & 0.71 & 0.9007 & 0.714 & 0.782 \\
\bottomrule
\end{tabular}
\end{sc}
}
\end{small}
\end{center}
\vspace{-4mm}
\end{table*}


\section{Why label supervision can fail to learn Attention maps}
\label{sec:theory}
In \S~\ref{sec:ablations} we presented a sequence of ablations showing that SPT is particularly helpful for learning attention weights. In~\S~\ref{sec:toy}, we showed that SPT induces learning of a proximity bias through matrices $W_Q$ and $W_K$, moving from random uniform attention scores to a diagonally-dominated token mixing. The experiments show what SPT-learned Attention maps look like, but not why label supervision may fail to learn them. We now give a simplified theoretical explanation.\\
Consider Attention magnitudes $a_{ij}=\exp(s_{ij}) / \sum_{k=1}^L\exp(s_{ik})$, parametrized by scores $s_{ij}$. We assume $s_{ij}$ to be independent of the input, an abstraction motivated by~Fig.~\ref{fig:toy_position_undo} and~\ref{fig:att_pattern_cifar_pt}. We consider learning a target pattern $s^*_{ij}=\Delta_{ij}$. To simplify the analysis, we study one-dimensional perturbations: $s_{ij}(\alpha)=\alpha \Delta_{ij}$. The solution is $\alpha^*=1$, while $\alpha=0$ models uniform Attention initialization. 

\begin{proposition} Let $X_1,\ldots,X_L\in\mathbb R^d$ be centered random tokens with finite second moments, and define the autocorrelation matrix $C\in\mathbb R^{L\times L}$ as $C_{ij}:=\mathbb E[X_i^\top X_j]$. For a score pattern $\Delta\in\mathbb R^{L\times L}$, consider training a single parameter $\alpha$ controlling the Attention score $s_{ij}(\alpha)$. Outputs are computed as $o_i(\alpha)=\sum_j a_{ij}(\alpha)X_j$. For a label $Y$, consider the mean-pooled~(from scratch, representations are aggregated) loss $\mathcal L_{\rm sup}(\alpha)=\mathbb E[\ell(\frac1{L}\sum_i o_i(\alpha),Y)]$ and the SPT loss $\mathcal L_{\rm SPT}(\alpha)=\frac1{2L}\sum_i\mathbb E[\|X_i-o_i(\alpha)\|^2]$. Let $\mathbf 1\in\mathbb R^L$ be the all-ones vector and define $\mathcal B:=\{\Gamma\in\mathbb R^{L\times L}: \Gamma^\top\mathbf 1=0\}$. Define $\langle A,B\rangle_F=\sum_{i,j}A_{ij}B_{ij}$ and $\bar \Delta = \Delta-\frac{1}{L}\Delta\mathbf 1\mathbf 1^\top$. For every $\Delta\in\mathcal B$, 
$$
\mathcal L_{\rm sup}'(0)=0,
\qquad
\mathcal L_{\rm SPT}'(0)=-\tfrac1{L^2}\langle \bar \Delta,C\rangle_F. 
$$
An analogous statement holds for the case where token-to-same-token interactions are excluded.
\label{prop:theory1}
\end{proposition}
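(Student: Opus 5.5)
The proof is a direct first-order expansion of the softmax around the uniform point $\alpha=0$, followed by two separate contractions: one against the mean-pooling vector for $\mathcal L_{\rm sup}$, and one against the autocorrelation $C$ for $\mathcal L_{\rm SPT}$.

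\textbf{Step 1: derivative of the Attention weights.} Write $s_{ij}(\alpha)=\alpha\Delta_{ij}$. The softmax Jacobian gives
\[
\tfrac{d}{d\alpha}a_{ij}(\alpha)=a_{ij}\Big(\Delta_{ij}-\textstyle\sum_k a_{ik}\Delta_{ik}\Big).
\]
At $\alpha=0$ we have $a_{ij}=1/L$, so $a_{ij}'(0)=\frac1L\big(\Delta_{ij}-\frac1L\sum_k\Delta_{ik}\big)=\frac1L\bar\Delta_{ij}$. This is exactly the row-centering $\bar\Delta=\Delta-\frac1L\Delta\mathbf 1\mathbf 1^\top$. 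Hence
\[
o_i'(0)=\tfrac1L\textstyle\sum_j\bar\Delta_{ij}X_j,
\qquad
o_i(0)=\bar X:=\tfrac1L\textstyle\sum_j X_j .
\]

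\textbf{Step 2: supervised loss.} The pooled output is $\frac1L\sum_i o_i(\alpha)$. Its derivative at $0$ is
\[
\tfrac1{L^2}\textstyle\sum_j\big(\sum_i\bar\Delta_{ij}\big)X_j .
\]
The column sums satisfy $\sum_i\bar\Delta_{ij}=(\Delta^\top\mathbf 1)_j-\frac1L\mathbf 1^\top\Delta\mathbf 1$. Both terms vanish when $\Delta\in\mathcal B$, since $\Delta^\top\mathbf 1=0$ implies $\mathbf 1^\top\Delta\mathbf 1=0$. So the pooled representation has zero first-order variation for every realization of the tokens. By the chain rule, $\frac{d}{d\alpha}\ell(\cdot,Y)\big|_{\alpha=0}=0$ pointwise, and therefore $\mathcal L_{\rm sup}'(0)=0$.

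Exchanging derivative and expectation is the only technical point. I will state a mild regularity assumption on $\ell$: differentiable in its first argument, with gradient of at most linear growth. Dominated convergence then applies using the finite second moments of the $X_j$. In fact the derivative vanishes pointwise, so only differentiability under the integral is needed.

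\textbf{Step 3: SPT loss.} Differentiate $\frac1{2L}\sum_i\mathbb E\|X_i-o_i\|^2$ to get
\[
\mathcal L_{\rm SPT}'(0)=-\tfrac1L\textstyle\sum_i\mathbb E\big[(X_i-\bar X)^\top o_i'(0)\big]
=-\tfrac1{L^2}\textstyle\sum_{i,j}\bar\Delta_{ij}\,\mathbb E\big[(X_i-\bar X)^\top X_j\big].
\]
The first part, $\sum_{i,j}\bar\Delta_{ij}C_{ij}$, is $\langle\bar\Delta,C\rangle_F$. The $\bar X$ part equals $\sum_j\big(\sum_i\bar\Delta_{ij}\big)\mathbb E[\bar X^\top X_j]$, which vanishes by the column-sum identity from Step 2. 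This gives the claimed formula. Here differentiation under the expectation is routine, because the integrand is a quadratic polynomial in the tokens with smooth coefficients in $\alpha$.

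\textbf{Step 4: excluding self-interactions.} Repeat the argument with softmax taken over $k\neq i$. At $\alpha=0$ the weights become $1/(L-1)$ off the diagonal and the centering is over $k\neq i$. The column-sum identity must then be rechecked with the adjusted class $\mathcal B$, i.e. with the diagonal of $\Delta$ zeroed.

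The main obstacle is this bookkeeping of the centering. One has to verify that the membership $\Delta\in\mathcal B$ (a column condition) survives the row-centering to $\bar\Delta$ and kills both the pooled derivative and the $\bar X$ cross term. I expect the column-sum identity to carry all the weight in both Steps 2 and 3.
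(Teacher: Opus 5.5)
Steps~1--3 are correct and are essentially the paper's proof. The paper first invokes Lemma~\ref{lemma:rowcenter} to replace $\Delta$ by a doubly-centered pattern and then reads off $a'_{ij}(0)=\Delta_{ij}/L$. You instead let the centering come out of the softmax Jacobian, $a'_{ij}(0)=\bar\Delta_{ij}/L$, and check explicitly that $\bar\Delta$ inherits zero column sums, a point the paper leaves implicit.

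The gap is in Step~4, where the bookkeeping you defer is exactly what breaks. With self-interactions excluded, the natural analogue of $\mathcal B$ (zero diagonal and $\sum_{i\neq j}\Delta_{ij}=0$) is not preserved by the masked row-centering $\tilde\Delta_{ij}=\Delta_{ij}-\frac{r_i}{L-1}$, where $r_i:=\sum_{k\neq i}\Delta_{ik}$. The subtracted quantity $\frac{1}{L-1}\sum_{i\neq j}r_i$ omits $i=j$. It is therefore no longer the vanishing grand total, but equals $-\frac{r_j}{L-1}$. Hence $\sum_{i\neq j}\tilde\Delta_{ij}=\frac{r_j}{L-1}$ and $h'(0)=\frac{1}{L(L-1)^2}\sum_j r_jX_j$, which is generically nonzero. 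For example, take $L=3$ and
\[
\Delta=\begin{pmatrix}0&1&0\\0&0&0\\0&-1&0\end{pmatrix},
\]
which has zero diagonal and zero column sums. One gets $h'(0)=\tfrac1{12}(X_1-X_3)$, so $\mathcal L'_{\rm sup}(0)=\tfrac1{12}\mathbb E[\nabla_h\ell(\bar X,Y)^\top(X_1-X_3)]$. This is nonzero, e.g.\ for the squared loss with $Y=X_1$ and i.i.d.\ standard Gaussian tokens. The paper avoids this by assuming doubly-normalized targets ($\Delta_{ii}=0$, $\sum_{j\neq i}\Delta_{ij}=0$, $\sum_{i\neq j}\Delta_{ij}=0$). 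In other words, the column condition is imposed on the masked-row-centered pattern rather than on $\Delta$. Then $a'_{ij}(0)=\Delta_{ij}/(L-1)$ and $h'(0)=0$.

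Second, even under that hypothesis the SPT cross term does not simply vanish as in Step~3. There it vanished because $o_i(0)=\bar X$ is independent of $i$. Now $o_i(0)=\bar X_{-i}$, and with $S_j:=\sum_k C_{kj}$ one has $\mathbb E[\bar X_{-i}^\top X_j]=\frac{S_j-C_{ij}}{L-1}$, so the column sums kill only the $S_j$ part. The leftover $-\frac{C_{ij}}{L-1}$ combines with $-C_{ij}$ to give $\mathbb E[(\bar X_{-i}-X_i)^\top X_j]=\frac{S_j}{L-1}-\frac{L}{L-1}C_{ij}$. Hence $\mathcal L'_{\rm SPT}(0)=-\frac{1}{(L-1)^2}\sum_{i\neq j}\Delta_{ij}C_{ij}$. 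A literal transcription of Step~3 would instead give the wrong prefactor $\frac{1}{L(L-1)}$.
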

\vspace{-2mm}

The proposition, proven in~\S\ref{sec:proofs}, shows, in a simplified setup, that there exist Attention patterns that are locally invisible to label supervision at uniform initialization under a mean-pooled loss. Instead, such patterns are learnable through the SPT loss. Fig.~\ref{fig:spt_theory} verifies the result numerically for a locality-inducing pattern $\Delta$. This supports the interpretation that reconstruction losses can provide stronger gradients for Attention-score directions that mean-pooled label supervision cannot detect locally.

\section{Conclusions}
\label{sec:conclusion}
We studied self-pretraining (SPT) for Transformer sequence classification with the goal of understanding why masked can improve supervised training. Across LRA ablations and a controlled synthetic task, our evidence points to Attention-score learning as a central bottleneck: label training often fails to organize useful query-key interactions. In our toy task and in one-layer CIFAR10/PathFinder models, we identified that SPT provides a flexible, proximity-biased Attention structure that fine-tuning can build on. Our theory provides one reason this can happen, showing that mean-pooled supervision can be locally blind to certain directions in the Attention scores that masked reconstruction can detect.

\textbf{Limitations.} The most detailed multi-seed component ablations focus on CIFAR10 and PathFinder; PathX is excluded for compute reasons, and several large LRA runs are single-seed. The toy task and theory deliberately simplify the architecture, data distribution, and Attention parametrization. Other effects, including LayerNorm, embeddings, depth, optimizer dynamics, and dataset-specific structure, may also contribute in larger settings. The theory isolates one possible obstruction, not the full explanation. We believe a more general theory is possible and of great interest, perhaps incorporating also empirical evidence from other aggregation strategies, including a CLS token.

\section*{Acknowledgments}
Antonio Orvieto acknowledges the financial support of the Hector Foundation and the computing resources provided by MPI-IS.

Coser Omar is a Ph.D. student enrolled in the National Ph.D. in Artificial Intelligence, XXXVIII cycle, course on Health and life sciences, organized by Università Campus Bio-Medico di Roma.
\bibliographystyle{plainnat}  
\bibliography{references} 

\clearpage
\newpage
\appendix
\section{Setup}
We describe below the LRA tasks, models, and experimental details.
\subsection{Tasks}
\label{sec:tasks}
\begin{itemize}
    \item \textbf{ListOps:} Inspired by~\citep{nangia2018listops}. Sequences are nested lists describing mathematical operations applied to token sets. The task requires understanding hierarchical structures. The sequence length is 2048.
    \item \textbf{Text:} Character-level IMDb reviews~\citep{maas2011imdb} for binary sentiment classification. Sequences are up to 2048 tokens.
    \item \textbf{Retrieval:} Character-level binary classification of document similarity scores with sequences up to 4096 tokens. The task was introduced by~\citet{radev2013acl}.
    \item \textbf{Image:} Grayscale CIFAR10~\citep{krizhevsky2009learning} images flattened into 1D sequences for 10-way classification without explicit 2D inductive bias, sequence length $1024$.
    \item \textbf{PathFinder, PathX:} Synthetic 2D visual tasks introduced by~\citep{linsley2018learning}, treated as 1D sequences for tracing capabilities (sequence lengths 1024 and 16384, respectively). They are both binary classification tasks.
\end{itemize}

\subsection{Experimental details}
\label{sec:exp_details}

Our training settings in most ablations closely follow choices by~\citet{amos2023never}.

\begin{itemize}
    \item For \textbf{ListOps}, the model is trained with an embedding size of 512, passed through 6 Transformer layers, each with 8 Attention heads. The feed-forward network has a hidden size of 1024. The tuned learning rates used for our experiments are $1 \times 10^{-4}$ (finetuning) and $1 \times 10^{-3}$ (pre-training), with batch sizes 64 to 128~(respectively). Weight decay is set at $0.1$, and cross-entropy is used as the pretraining loss. 
    
    \item In the \textbf{Text} dataset, the training strategy remains similar to ListOps, with identical feature size, depth, and Attention heads. However, the learning rates are slightly adjusted to $1 \times 10^{-4}$ (finetuning) and $5 \times 10^{-4}$ (pre-training), with batch sizes of 64 and 32~(respectively).
    
    \item For the \textbf{Retrieval} dataset, the model adopts a smaller feature size of 128 and a reduced depth of 4 layers, with 4 Attention heads and a feed-forward hidden size of 512. The learning rates are $5 \times 10^{-4}$ (finetuning) and $5 \times 10^{-3}$ (pre-training), with batch sizes of 16 and 32 respectively. Notably, weight decay is here set to zero, and cross-entropy loss is used. 
    
    \item The Image dataset~(a.k.a. \textbf{CIFAR10}) employs a distinct setup, with a feature size of 64 and a shallow depth of 3 layers. There are 4 Attention heads and a feed-forward hidden size of 128. Mean pooling is used at the last layer, reflecting the spatial nature of image data. The learning rates are fixed at $1 \times 10^{-3}$ (finetuning and pre-training), with batch sizes of 16 and 32~(finetuning and pre-training, respectively). Weight decay is set to zero. The pretraining loss is L2 loss, emphasizing pixel-level reconstruction rather than classification.
    
    \item Finally, in the \textbf{PathFinder} dataset, the model configuration closely resembles that of Retrieval, with a feature size of 128, a depth of 4 layers, 4 Attention heads, and a feed-forward size of 512. The learning rates vary between $5 \times 10^{-4}$ (finetuning) and $1 \times 10^{-3}$ (pre-training), with batch sizes of 16 and 32~(respectively). Like ListOps and Retrieval, cross-entropy loss is used for pretraining.
\end{itemize}
During further experiments involving changing the number of layers and datasets, we found these settings to be mostly stable, with only minor adjustments to the learning rate needed.


\subsection{Model}
\label{sec:model_appendix}

We always use a pre-norm Transformer model. Given a minibatch
\[
    \mathcal B=\{(s_b,y_b)\}_{b=1}^B,\qquad
    s_b=(s_{b,1},\ldots,s_{b,\ell_b}),\qquad \ell_b\le L,
\]
each input element is mapped to
\(X_{b,t}\in\mathbb R^D\). For discrete or quantized inputs,
\(X_{b,t}=E(s_{b,t})\); for continuous inputs,
\(X_{b,t}=E(x^{\rm raw}_{b,t})\), where \(E\) is a learned linear encoder.
Sinusoidal positional encodings are added once at the input:
\[
    Z^{(0)}_{b,t}=\mathrm{Drop}(X_{b,t}+P_t).
\]

For layer \(\ell=0,\ldots,T-1\), we write \(Z=Z^{(\ell)}\) and
\(\widetilde Z=\mathrm{LN}(Z)\). With \(H\) heads and \(d=D/H\), the head
projections are
\[
    W_Q^{\ell,h},W_K^{\ell,h},W_V^{\ell,h}\in\mathbb R^{d\times D},
\]
and
\[
    Q_h=\widetilde Z(W_Q^{\ell,h})^\top,\qquad
    K_h=\widetilde Z(W_K^{\ell,h})^\top,\qquad
    V_h=\widetilde Z(W_V^{\ell,h})^\top .
\]
Hence the raw score matrix for head \(h\) is
\[
    S_h
    =
    \frac{Q_hK_h^\top}{\sqrt d}
    =
    \frac{\widetilde Z (W_Q^{\ell,h})^\top W_K^{\ell,h}\widetilde Z^\top}{\sqrt d}.
\]
We therefore interpret \((W_Q^{\ell,h})^\top W_K^{\ell,h}\in\mathbb R^{D\times D}\).
The row-wise attention weights and multi-head output are
\[
    A_h=\mathrm{softmax}_{\rm row}(S_h),\qquad
    \mathrm{MHA}_{\ell}(\widetilde Z)
    =
    \mathrm{Concat}_{h=1}^H(A_hV_h)W_O^\ell .
\]
The residual updates are
\[
    Z' = Z+\mathrm{Drop}(\mathrm{MHA}_{\ell}(\mathrm{LN}(Z))),
\]
\[
    Z^{(\ell+1)}
    =
    Z'
    +
    \mathrm{Drop}\!\left(
        \mathrm{MLP}_{\ell}(\mathrm{LN}(Z'))
    \right),
\]
where
\[
    \mathrm{MLP}_{\ell}(x)=W_2^\ell\sigma(W_1^\ell x+b_1^\ell)+b_2^\ell .
\]

For supervised classification, the final sequence is mean-pooled over
non-padding positions:
\[
    h_b=\frac{1}{\ell_b}\sum_{t=1}^{\ell_b} Z^{(T)}_{b,t,:},
    \qquad
    \hat y_b=W_{\rm cls}h_b+b_{\rm cls},
\]
and the model is trained with cross-entropy against \(y_b\).

For self-pretraining, a mask set
\(\mathcal M_b\subseteq\{1,\ldots,\ell_b\}\) is sampled for each sequence.
The encoder receives the masked input, while the target is the original
unmasked input element
\[
    u_{b,t}=
    \begin{cases}
        x^{\rm raw}_{b,t}, & \text{continuous inputs},\\
        s_{b,t}, & \text{discrete or quantized inputs}.
    \end{cases}
\]
A token-wise head predicts
\[
    r_{b,t}=W_{\rm spt}Z^{(T)}_{b,t,:}+b_{\rm spt},
\]
where the output dimension is the raw input dimension for continuous
reconstruction and the vocabulary or quantization size for discrete
reconstruction. The SPT objective is
\[
    \mathcal L_{\rm SPT}
    =
    \frac{1}{\sum_b|\mathcal M_b|}
    \sum_{b=1}^B
    \sum_{t\in\mathcal M_b}
    \ell_{\rm SPT}^{(m)}(r_{b,t},u_{b,t}),
\]
with
\[
    \ell_{\rm SPT}^{(m)}(r,u)
    =
    \begin{cases}
        \frac12\|r-u\|_2^2,
        & \text{continuous targets},\\
        -\log \mathrm{softmax}(r)_u,
        & \text{discrete or quantized targets}.
    \end{cases}
\]
Thus SPT reconstructs the original input element, not the embedded vector \(X_{b,t}\).

\clearpage
\newpage
\section{Further Experimental results}

\subsection{SPT evidence beyond the LRA}
\label{sec:other_datasets}
We chose four datasets in \cite{dempster2019rocket} to showcase that SPT also works beyond the LRA, as suggested from earlier investigations~\citep{el2021large}. GunPoint and GunPointAgeSpan are hand-motion datasets. ECG200 and ECGFiveDays are electrocardiogram datasets. We ran each experiment on five random seeds, reporting mean plus standard deviation.
\begin{table}[h]
\centering
\caption{Validation accuracy on UCR datasets: training from scratch vs.\ self-pretraining (SPT). Higher is better; bold indicates the best result per row.}
\label{tab:spt-results}
\begin{tabular}{lcc}
\toprule
\textbf{Dataset} & \textbf{From Scratch} & \textbf{SPT} \\
\midrule
GunPoint        & $0.914 \pm 0.024$ & $0.964 \pm 0.013$ \\
GunPointAgeSpan & $0.923 \pm 0.026$ & $0.955 \pm 0.011$  \\
ECG200          & $0.919 \pm 0.019$ & $0.954 \pm 0.030$  \\
ECG5Days        & $0.926 \pm 0.008$ & $0.931 \pm 0.011$ \\
\midrule
Average & $0.921 \pm 0.019$ & $0.951 \pm 0.016$ \\
\bottomrule
\end{tabular}
\end{table}

\subsection{Precise numerical results for Fig.~\ref{fig:pt_epochs}}
\begin{table}[h]
\caption{Ablation on number of epochs, see \S~\ref{sec:duration}. These data are shown in Fig.~\ref{fig:pt_epochs}}
\label{tb:layers}
\vskip 0.15in
\begin{center}
\begin{small}
\begin{sc}
        \begin{tabular}{rlll}
            \toprule
            \textbf{Pretr. Len.} & ListOps & CIFAR10 & PathFinder \\
            \midrule
            0 epochs    & 0.403      & 0.499       & 0.67 \\
            1 epoch   & 0.48\dgs{+0.08}  & 0.56\dgs{+0.06}  & 0.69\dgs{+0.02} \\
            10 epochs   & 0.55\dgs{+0.15}  & 0.71\dgs{+0.21}  & 0.71\dgs{+0.04} \\
            50 epochs   & 0.55\dgs{+0.15}  & 0.71\dgs{+0.21}  & 0.71\dgs{+0.04} \\
            100 epochs  & 0.55\dgs{+0.15}  & 0.71\dgs{+0.21}  & 0.73\dgs{+0.06} \\
            150 epochs  & 0.56\dgs{+0.16}  & 0.71\dgs{+0.21}  & 0.77\dgs{+0.10} \\
            200 epochs  & 0.56\dgs{+0.16}  & 0.72\dgs{+0.22}  & 0.85\dgs{+0.18} \\
            \bottomrule
        \end{tabular}
\end{sc}
\end{small}
\end{center}
\vskip -0.1in
\end{table}

\subsection{Precise numerical results for Fig.~\ref{fig:depth}}
\begin{table}[h]
\caption{Ablation over the number of layers; numerical values for Fig.~\ref{fig:depth}.}

\label{tb:app_epoch_ablation}
\vskip 0.15in
\begin{center}
\begin{small}
\begin{sc}
\begin{tabular}{@{}r|lllllr@{}}
\toprule
\textbf{Data }&\textbf{Modality} & \textbf{1 layer} & \textbf{2 layers}& \textbf{3/4 layers} \\
\hline
& From scratch (train) & 0.69 &0.78 &0.88 \\
&From scratch (val)  & 0.48& 0.48& 0.50 \\
CIFAR 10 &+ Self Pre-trained (train) &0.93 & 0.95&0.96\\
 &+ Self Pre-trained (val) &0.60 \dgs{+0.13} &0.67 \dgs{+0.19}&0.72 \dgs{+0.22}\\
&Pre-trained w/ PathFinder (train) & 0.84 &0.94 &0.97  \\
&Pre-trained w/ PathFinder (val) & 0.56 \dgs{+0.08}& 0.56 \dgs{+0.08}& 0.57 \dgs{+0.07} \\
\hline
& From scratch (train) & 0.67 & 0.76 & 0.83 \\
&From scratch (val) &0.669 & 0.670 & 0.672\\
PathFinder &Self Pre-trained (train) &0.77 &0.85 & 0.97\\
 &Self Pre-trained (val) &0.70 \dgs{+0.03}&0.78 \dgs{+0.10}&0.85 \dgs{+0.18}\\
&Pre-trained w/ CIFAR 10 (train) & 0.85& 0.89&0.91\\
&Pre-trained w/ CIFAR 10 (val) & 0.68 \dgs{+0.01}& 0.68 \dgs{+0.01}& 0.68 \dgs{+0.01}\\
\bottomrule
\end{tabular}
\end{sc}
\end{small}
\end{center}
\vskip -0.1in
\end{table}

\subsection{Full Ablation on CIFAR and PathFinder 1 Layer}
The minimal setup needed to recover full SPT initialization performance in $1$ layer is to initialize only the query and key (QK) projections. On CIFAR10, QK+Enc+Norm performs best, improving over full initialization by about +3 percentage points, while Att+Enc+Norm yields a smaller gain. Most other strategies provide only marginal improvements, except Att+MLP+Norm, which matches the Only Attention configuration. On PathFinder, the trend is similar but more nuanced: QK+Norm, Att+MLP+Norm, and Att+Enc+Norm slightly surpass full initialization, while Only QK nearly matches it. Most remaining configurations offer little benefit over training from scratch. 

\begin{table*}[ht]
\caption{One-layer models~(extended Tb.~\ref{tab:one_layer_qk}): ablation of SPT initialization strategies focusing on Attention and QK parameters. See full discussion in~\S\ref{sec:subset}.}
\label{tab:qk-att-ablation}
\vspace{2mm}
\centering
\scriptsize
\setlength{\tabcolsep}{3pt}
\resizebox{\textwidth}{!}{
\begin{tabular}{lccccccccc}
\toprule
\textbf{Dataset} &
\textbf{Full Init} &
\textbf{Only Att} &
\textbf{Only QK} &
\textbf{QK+Enc+Norm} &
\textbf{Att+Enc+Norm} &
\textbf{QK+Norm} &
\textbf{From Scratch} &
\textbf{Lin+Att+MLP} &
\textbf{Att+MLP+Norm} \\
\midrule
CIFAR10 &
0.603 & 0.598 & 0.607 & 0.64 & 0.62 & 0.521 & 0.486 & 0.540 & 0.598 \\
PathFinder &
0.706 & 0.699 & 0.703 & 0.701 & 0.708 & 0.7105 & 0.671 & 0.687 & 0.71 \\
\bottomrule
\end{tabular}
}
\vspace{-3mm}
\end{table*}

\clearpage
\newpage
\subsection{Weights Analysis}
\label{sec:weight}

In the following tables, $R$ denotes random initialization, SPT denotes the checkpoint after masked reconstruction, SC denotes the checkpoint obtained by supervised training from scratch, and FT denotes the checkpoint obtained by finetuning the SPT model on labels. For a parameter block $\theta$, each entry $A\!\rightarrow\!B$ reports the Frobenius displacement $\|\theta_B-\theta_A\|_F$, while the final columns report the corresponding parameter norms. These measurements separate actual training trajectories, such as $R\!\rightarrow\!SC$, $R\!\rightarrow\!SPT$, and $SPT\!\rightarrow\!FT$, from endpoint comparisons such as $SC\!\rightarrow\!FT$. They are intended to quantify whether supervised training from random initialization substantially reorganizes the same parameter blocks that are reshaped by the SPT pipeline.

Let $Q_c^{(s)}$ denote the (flattened) weight tensor of component $c$ at training stage
$s \in \{\mathrm{R}, \mathrm{SPT}, \mathrm{SC}, \mathrm{FT}\}$. We report the parameter
displacement $\|Q_c^{(s_2)} - Q_c^{(s_1)}\|$ between stages and the parameter norm
$\|Q_c^{(s)}\|$, both computed as $\ell_2$ (Frobenius) norms.

\subsubsection{Analysis for CIFAR}

\begin{table*}[ht]
\caption{Encoder input parameter displacement and norms across training stages.
The top row reports the aggregate encoder statistics, while subsequent rows
provide a layer- and component-wise breakdown.
The difference between the weight norms obtained from training from scratch
and random initialization is small for the encoder
($\Delta_{\text{norm}} = \|Q_{SC}\| - \|Q_R\| = -0.25$),
indicating that training from scratch does not significantly alter the scale
of the encoder input parameters relative to random initialization.}
\centering
\small
\setlength{\tabcolsep}{4pt}
\resizebox{\textwidth}{!}{%
\begin{tabular}{llrrrrrrrrr}
\toprule
Layer & Component &
R$\rightarrow$SPT &
R$\rightarrow$SC &
SPT$\rightarrow$FT &
SC$\rightarrow$FT &
R$\rightarrow$FT &
$\|Q_R\|$ &
$\|Q_{SPT}\|$ &
$\|Q_{SC}\|$ &
$\|Q_{FT}\|$ \\
\midrule
\multicolumn{2}{l}{\textbf{Encoder}}
& 7.94 & 0.66 & 12.22 & 18.64 & 18.75 & 4.39 & 5.84 & 4.14 & 17.34 \\
\midrule
0 & Attn Q & 24.51 & 7.90 & 49.55 & 59.50 & 59.43 & 4.66 & 25.48 & 9.51 & 60.22 \\
0 & Attn K & 25.87 & 8.86 & 77.88 & 90.54 & 90.73 & 4.66 & 26.98 & 10.31 & 91.58 \\
0 & Attn V & 11.61 & 5.35 & 14.16 & 10.98 & 9.44 & 4.66 & 12.07 & 7.32 & 8.20 \\
0 & Attn O & 12.16 & 5.08 & 20.15 & 19.81 & 19.26 & 4.66 & 12.01 & 6.35 & 18.93 \\
0 & Norm   & 5.76  & 2.71 & 3.84  & 5.87  & 6.75  & 8.00 & 7.20 & 6.95 & 4.80 \\
\midrule
1 & MLP Up   & 28.41 & 11.01 & 76.04 & 84.73 & 84.34 & 6.60 & 29.30 & 13.44 & 84.98 \\
1 & MLP Down & 26.96 & 9.98  & 96.68 & 107.65& 107.17& 4.62 & 27.38 & 11.36 & 107.41 \\
1 & MLP All  & 39.17 & 14.86 & 123.00& 136.99& 136.38& 8.05 & 40.10 & 17.60 & 136.97 \\
1 & Norm     & 4.05  & 1.59  & 6.86  & 8.19  & 8.63  & 8.00 & 6.64 & 7.61 & 11.24 \\
\midrule
2 & Attn Q & 20.61 & 7.79 & 49.30 & 60.86 & 60.76 & 4.67 & 21.68 & 9.46 & 61.84 \\
2 & Attn K & 22.47 & 8.07 & 59.87 & 74.09 & 74.11 & 4.67 & 23.65 & 9.85 & 75.14 \\
2 & Attn V & 17.56 & 5.97 & 50.13 & 59.42 & 59.18 & 4.67 & 18.06 & 7.86 & 59.67 \\
2 & Attn O & 15.83 & 5.58 & 58.62 & 65.75 & 65.54 & 4.67 & 16.35 & 7.51 & 66.03 \\
2 & Norm   & 3.56  & 1.77 & 3.35  & 4.57  & 5.34  & 8.00 & 6.34 & 7.17 & 4.56 \\
\midrule
3 & MLP Up   & 24.95 & 11.82 & 102.90 & 109.50 & 109.44 & 6.63 & 25.25 & 14.24 & 109.73 \\
3 & MLP Down & 27.74 & 11.46 & 111.26 & 118.17 & 117.38 & 4.65 & 27.87 & 12.85 & 117.45 \\
3 & MLP All  & 37.31 & 16.46 & 151.55 & 161.10 & 160.49 & 8.10 & 37.60 & 19.18 & 160.73 \\
3 & Norm     & 3.62  & 1.30 & 6.96   & 6.26   & 6.18   & 8.00 & 6.19 & 8.33 & 11.70 \\
\midrule
4 & Attn Q & 18.96 & 9.08 & 48.32 & 53.14 & 52.66 & 4.73 & 19.88 & 10.67 & 53.24 \\
4 & Attn K & 19.16 & 8.84 & 57.86 & 64.84 & 64.36 & 4.73 & 20.23 & 10.53 & 64.99 \\
4 & Attn V & 18.59 & 6.30 & 72.57 & 79.92 & 79.89 & 4.73 & 19.36 & 8.30 & 80.34 \\
4 & Attn O & 18.60 & 5.60 & 82.62 & 89.12 & 88.90 & 4.73 & 19.47 & 7.90 & 89.39 \\
4 & Norm   & 3.07  & 0.97 & 3.66  & 4.53  & 4.65  & 8.00 & 7.50 & 7.85 & 4.50 \\
\midrule
5 & MLP Up   & 24.12 & 12.93 & 125.84 & 131.78 & 131.52 & 6.58 & 24.53 & 15.03 & 131.73 \\
5 & MLP Down & 31.98 & 13.98 & 135.84 & 143.46 & 142.55 & 4.64 & 32.28 & 15.15 & 142.70 \\
5 & MLP All  & 40.06 & 19.04 & 185.17 & 194.80 & 193.96 & 8.05 & 40.54 & 21.34 & 194.21 \\
5 & Norm     & 2.88  & 1.63 & 18.84  & 17.62  & 18.48  & 8.00 & 7.12 & 8.96 & 25.47 \\
\bottomrule
\end{tabular}%
}
\label{tab:encoder_displacement_combined}
\end{table*}

\subsubsection{Analysis for PathFinder}
\begin{table*}[ht]
\caption{Encoder input parameter displacement and norms across training stages.
The top row reports the aggregate encoder statistics, while subsequent rows
provide a layer- and component-wise breakdown.
The difference between the weight norms obtained from training from scratch
and random initialization is small for the encoder
($\Delta_{\text{norm}} = \|Q_{SC}\| - \|Q_R\| = -0.16$),
indicating that training from scratch does not significantly alter the scale
of the encoder input parameters relative to random initialization.}
\centering
\footnotesize
\setlength{\tabcolsep}{4pt}
\resizebox{\textwidth}{!}{%
\begin{tabular}{llrrrrrrrrr}
\toprule
Layer & Component &
R$\rightarrow$SPT &
R$\rightarrow$SC &
SPT$\rightarrow$FT &
SC$\rightarrow$FT &
R$\rightarrow$FT &
$\|Q_R\|$ &
$\|Q_{SPT}\|$ &
$\|Q_{SC}\|$ &
$\|Q_{FT}\|$ \\
\midrule
\multicolumn{2}{l}{\textbf{Encoder}}
& 17.96 & 3.57 & 1.85 & 19.47 & 18.92 & 6.70 & 16.90 & 6.54 & 18.05 \\
\midrule
0 & Attn Q & 127.64 & 48.66 & 54.24 & 138.04 & 129.23 & 6.57 & 127.76 & 49.11 & 129.32 \\
0 & Attn K & 172.04 & 50.52 & 54.30 & 185.61 & 178.34 & 6.57 & 172.33 & 50.99 & 178.60 \\
0 & Attn V & 15.42  & 52.44 & 11.64 & 55.59  & 18.32  & 6.57 & 13.99  & 53.05 & 17.15  \\
0 & Attn O & 98.96  & 62.65 & 32.75 & 116.54 & 97.86  & 6.57 & 98.83  & 63.49 & 97.73  \\
0 & Norm   & 10.69  & 6.69  & 0.94  & 11.23  & 10.59  & 11.31& 4.70   & 11.86 & 4.34   \\
\midrule
1 & MLP Up   & 108.61 & 53.88 & 46.07 & 128.11 & 116.72 & 6.57 & 108.93 & 54.81 & 117.05 \\
1 & MLP Down & 173.71 & 47.34 & 54.66 & 189.87 & 184.05 & 6.59 & 174.00 & 48.07 & 184.33 \\
1 & MLP All  & 204.87 & 71.73 & 71.48 & 229.05 & 217.95 & 9.30 & 205.28 & 72.90 & 218.36 \\
1 & Norm     & 10.30  & 5.39  & 3.68  & 10.90  & 9.93   & 11.31& 9.45   & 11.96 & 11.07  \\
\midrule
2 & Attn Q & 137.44 & 50.44 & 51.81 & 146.12 & 137.71 & 6.57 & 137.67 & 50.85 & 137.94 \\
2 & Attn K & 159.23 & 48.64 & 49.88 & 167.84 & 160.89 & 6.57 & 159.67 & 49.10 & 161.34 \\
2 & Attn V & 103.28 & 31.03 & 53.13 & 123.11 & 119.14 & 6.57 & 103.49 & 31.64 & 119.41 \\
2 & Attn O & 108.03 & 33.27 & 57.09 & 126.12 & 122.11 & 6.57 & 108.13 & 33.79 & 122.23 \\
2 & Norm   & 7.82   & 7.31  & 2.68  & 7.65   & 8.52   & 11.31& 8.35   & 8.03  & 6.86   \\
\midrule
3 & MLP Up   & 121.88 & 48.94 & 60.51 & 141.23 & 133.06 & 6.54 & 121.89 & 49.51 & 133.12 \\
3 & MLP Down & 189.59 & 49.10 & 70.81 & 205.75 & 199.57 & 6.60 & 189.76 & 49.59 & 199.72 \\
3 & MLP All  & 225.39 & 69.33 & 93.14 & 249.56 & 239.86 & 9.29 & 225.54 & 70.07 & 240.02 \\
3 & Norm     & 7.30   & 6.25  & 2.13  & 7.22   & 6.64   & 11.31& 9.03   & 9.73  & 8.99   \\
\midrule
4 & Attn Q & 89.88  & 48.52 & 50.80 & 108.37 & 97.04  & 6.56 & 90.05  & 49.04 & 97.17  \\
4 & Attn K & 108.07 & 49.87 & 52.84 & 125.93 & 116.10 & 6.56 & 108.47 & 50.24 & 116.48 \\
4 & Attn V & 196.25 & 30.69 & 72.89 & 211.59 & 209.45 & 6.56 & 196.62 & 31.30 & 209.80 \\
4 & Attn O & 166.50 & 35.85 & 64.68 & 183.96 & 179.94 & 6.56 & 167.14 & 36.26 & 180.54 \\
4 & Norm   & 8.27   & 7.85  & 1.40  & 6.22   & 8.73   & 11.31& 6.10   & 7.32  & 5.42   \\
\midrule
5 & MLP Up   & 145.63 & 52.05 & 69.95 & 168.30 & 160.38 & 6.54 & 145.63 & 52.47 & 160.44 \\
5 & MLP Down & 245.78 & 52.32 & 81.35 & 259.95 & 255.42 & 6.57 & 245.92 & 52.77 & 255.57 \\
5 & MLP All  & 285.68 & 73.80 & 107.29& 309.68 & 301.60 & 9.27 & 285.81 & 74.42 & 301.76 \\
5 & Norm     & 7.81   & 6.27  & 2.73  & 9.48   & 7.97   & 11.31& 10.12  & 9.71  & 11.60  \\
\midrule
6 & Attn Q & 104.73 & 48.12 & 61.59 & 121.54 & 111.79 & 6.58 & 104.81 & 48.53 & 111.83 \\
6 & Attn K & 127.40 & 48.75 & 63.64 & 145.94 & 137.34 & 6.58 & 127.44 & 49.16 & 137.38 \\
6 & Attn V & 247.10 & 36.29 & 77.28 & 258.34 & 255.48 & 6.58 & 247.23 & 36.83 & 255.60 \\
6 & Attn O & 217.16 & 44.17 & 73.04 & 229.59 & 225.50 & 6.58 & 217.34 & 44.79 & 225.69 \\
6 & Norm   & 9.59   & 7.45  & 1.44  & 7.04   & 8.84   & 11.31& 6.47   & 7.10  & 6.73   \\
\midrule
7 & MLP Up   & 128.53 & 54.64 & 73.37 & 156.92 & 147.30 & 6.57 & 128.75 & 55.06 & 147.45 \\
7 & MLP Down & 292.75 & 61.59 & 120.01& 315.85 & 310.40 & 6.56 & 292.94 & 61.97 & 310.53 \\
7 & MLP All  & 319.73 & 82.33 & 140.66& 352.68 & 343.57 & 9.29 & 319.98 & 82.90 & 343.76 \\
7 & Norm     & 6.96   & 5.02  & 7.41  & 9.34   & 7.72   & 11.31& 8.04   & 10.77 & 14.29  \\
\bottomrule
\end{tabular}%
}
\label{tab:encoder_displacement_combined2}
\end{table*}

\newpage

\begin{figure}[ht]
    \centering
    \begin{subfigure}[t]{0.45\linewidth}
        \centering
        \includegraphics[width=\linewidth]{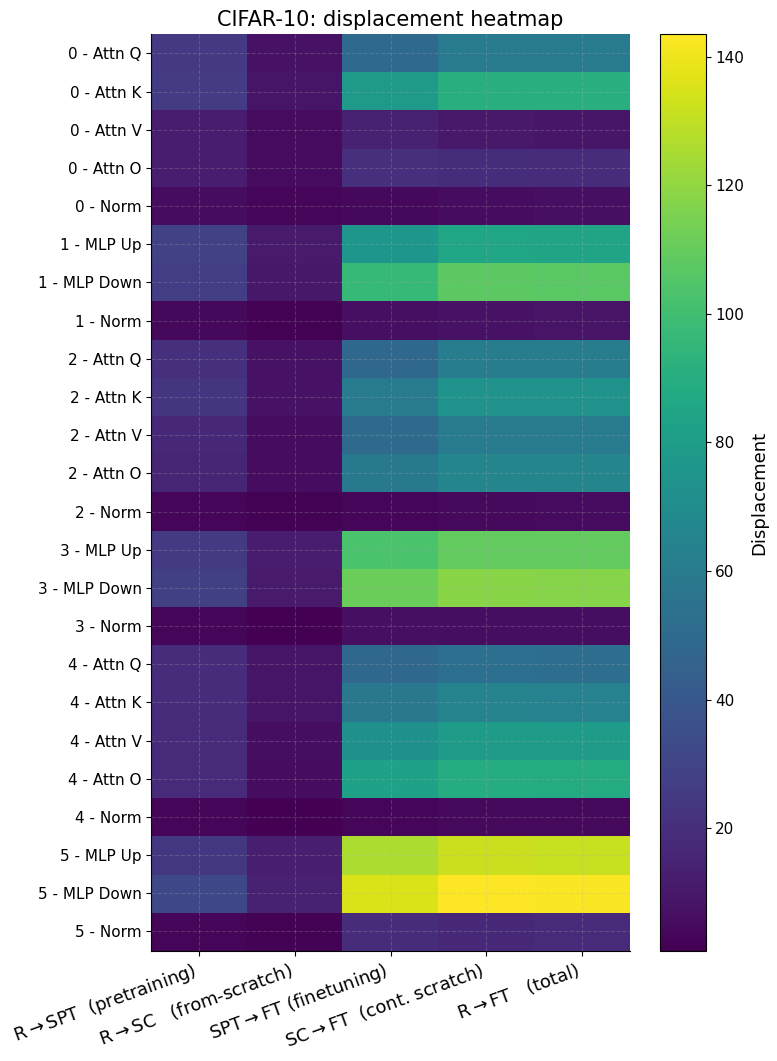}
        \caption{CIFAR10}
        \label{fig:heatmap_cifar}
    \end{subfigure}
    \hspace{0.02\linewidth}
    \begin{subfigure}[t]{0.45\linewidth}
        \centering
        \includegraphics[width=\linewidth]{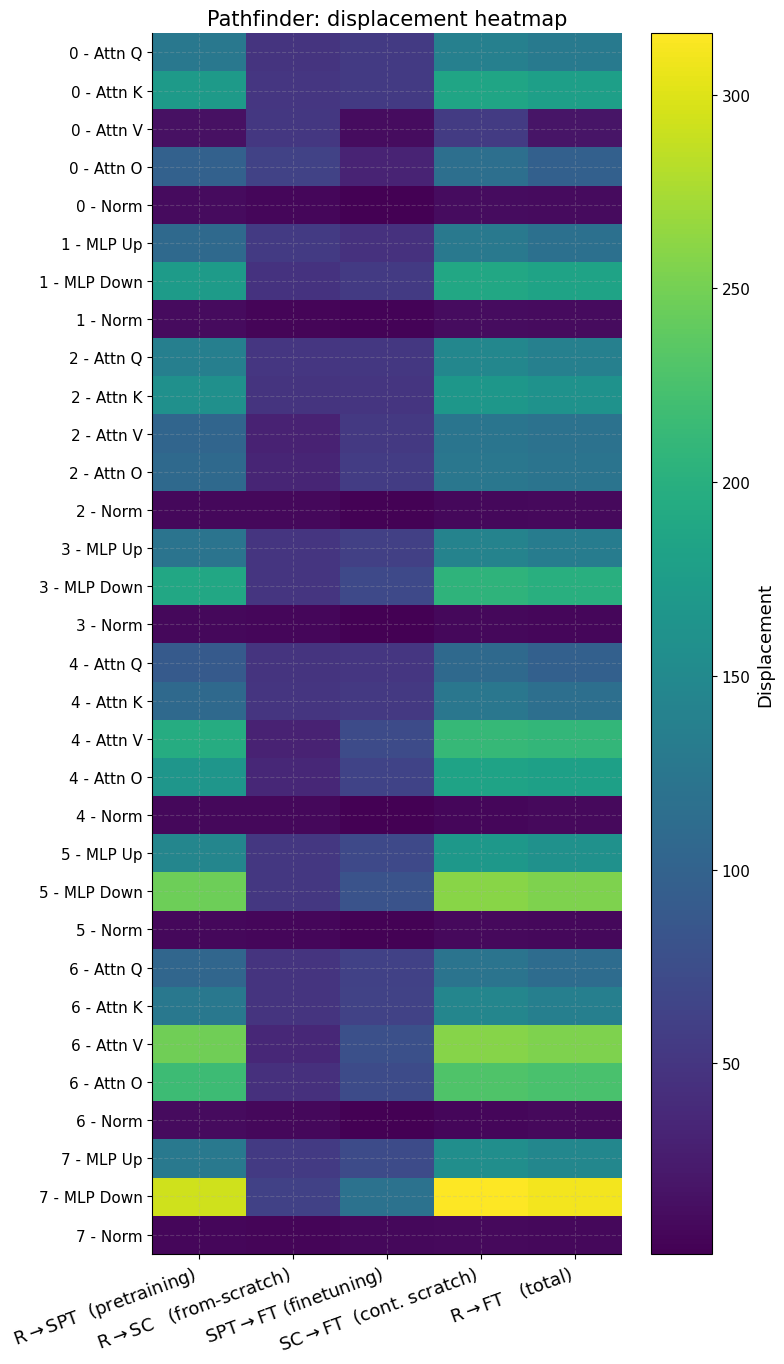}
        \caption{PathFinder}
        \label{fig:heatmap_path}
    \end{subfigure}
    \caption{\textit{Layer-wise parameter displacement across training trajectories.}
    R$\to$SC displacement is consistently smaller than R$\to$SPT and SPT$\to$FT,
    showing that supervised training from random initialization induces limited movement.
    MLP layers move substantially more than Attention projections, while Attention layers
    move little unless initialized from SPT.}
    \label{fig:displacement_heatmaps}
\end{figure}

\begin{figure}[ht]
    \centering
    \begin{subfigure}[t]{1\linewidth}
        \centering
        \includegraphics[width=\linewidth]{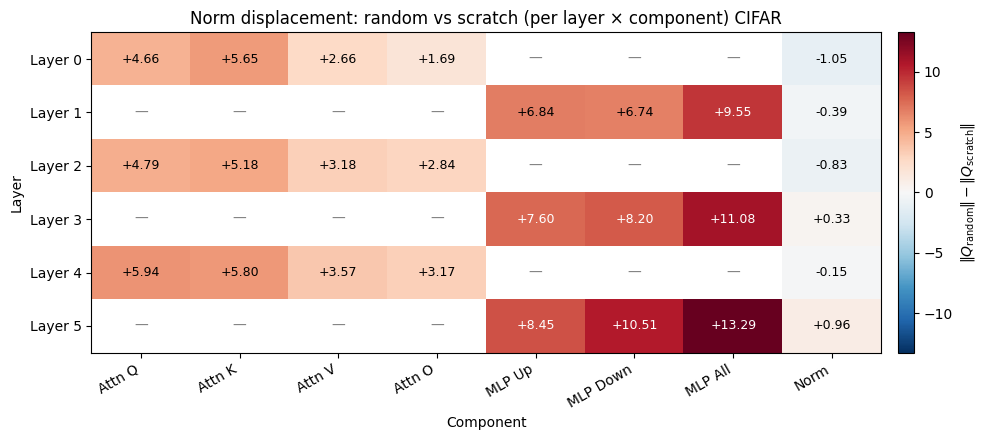}
        \caption{CIFAR10}
        \label{fig:delta_heatmap_cifar}
    \end{subfigure}
    \hfill
    \begin{subfigure}[t]{1\linewidth}
        \centering
        \includegraphics[width=\linewidth]{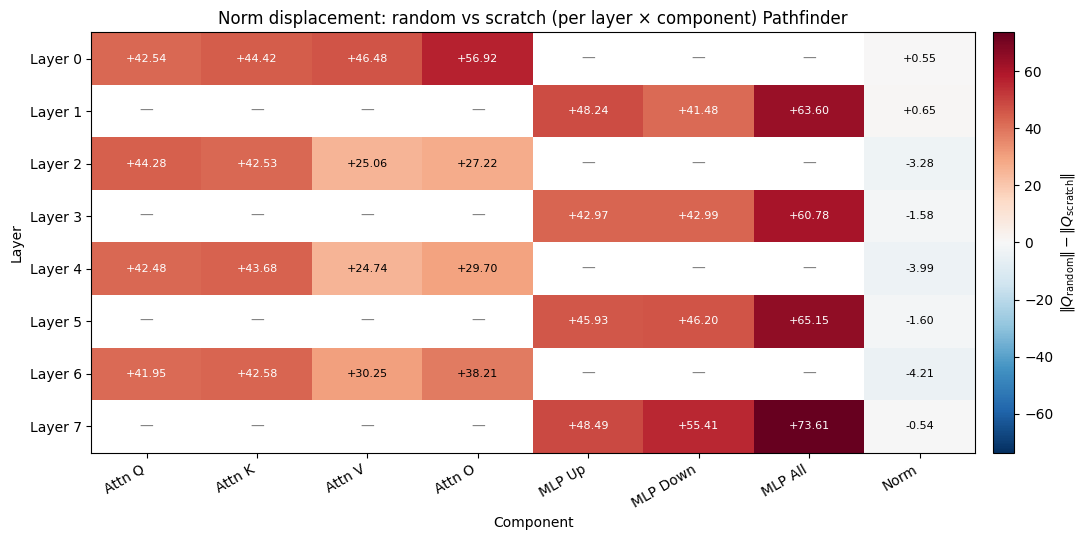}
        \caption{PathFinder}
        \label{fig:delta_heatmap_path}
    \end{subfigure}
    \caption{\textit{Delta between $\|Q_{\mathrm{random}}\|$ and $\|Q_{\mathrm{scratch}}\|$ across layers and components.}
    Norm displacement is largest for MLP All blocks and grows with depth, while normalization layers stay near zero (and even turn slightly negative in deeper layers).
    Attention projections move noticeably less than MLP blocks, consistent with the smaller-table observations.}
    \label{fig:delta_heatmaps}
\end{figure}
\clearpage
\newpage
\section{Synthetic Task}
\label{sec:synthetic_appendix}
We first describe the synthetic two-dimensional binary time-series classification dataset used in \S~\ref{sec:toy}, and then present additional plots supporting our claims.

\subsection{Task Description}
Each sample is a sequence $x = (x_j)_{j=0}^{L-1} \in \mathbb{R}^{L \times 2}$ with $L=100$ and a binary label $y \in \{0,1\}$. With
$$
a_0=(0.66,\,0.55), \quad
b_0=(-0.495,\,-0.605), \quad
a_1=(0.605,\,-0.55), \quad
b_1=(-0.55,\,0.605).
$$

\textbf{Conditioned on the binary label}, we define anchor points
$$
(c_1,c_2)=
\begin{cases}
(a_0,b_0), & y=0,\\
(a_1,b_1), & y=1.
\end{cases}
$$

For each sequence, we sample independently
$$
s \sim \mathcal{U}(0.12,0.28), \qquad
\phi \sim \mathcal{U}(0,2\pi), \qquad
\alpha \sim \mathcal{N}(1,0.1^2).
$$
Define normalized time $t_j = j/(L-1)$ and a ``warped'' time variable
$$
\tilde t_j = t_j
+ s \sin(2\pi t_j)\bigl(0.5 + 0.5 \cos(2\pi t_j)\bigr).
$$
The mixing coefficient and mean trajectory are defined as
$$
m_j = 0.5 + 0.35 \sin(2\pi \tilde t_j + \phi), \qquad \mu_j = m_j c_1 + (1-m_j) c_2.
$$
Finally, a smooth oscillatory drift is added: $d_j = 0.30 \bigl(\sin(2\pi \tilde t_j),\, \cos(2\pi \tilde t_j)\bigr)$. Observations are generated as
\[
x_j = \alpha (\mu_j + d_j) + \varepsilon_j + \delta_j,
\qquad
\varepsilon_j \sim \mathcal{N}(0,\sigma^2 I_2), \ \sigma=0.55.
\]

To introduce outliers, sample
$N_{\mathrm{sp}} \sim \mathrm{Unif}\{0,1,2,3\}$ and choose a random index set
$I \subset \{0,\dots,L-1\}$ with $|I|=N_{\mathrm{sp}}$.
For $j \in I$, draw $\delta_j \sim \mathcal{N}(0,I_2)$; otherwise set $\delta_j=0$.

\begin{figure}[h]
    \centering
    \includegraphics[width=0.5\linewidth]{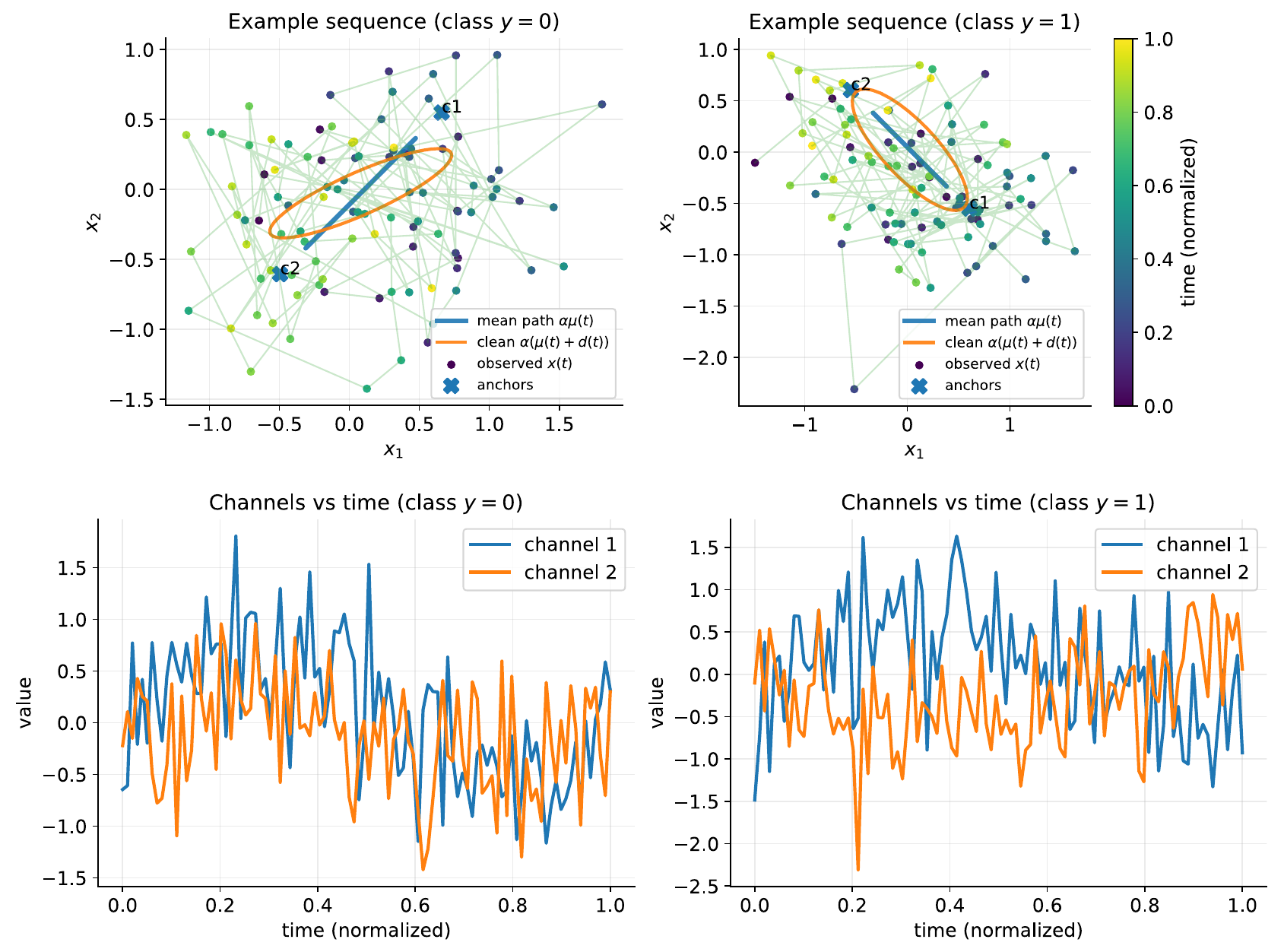}
    \caption{Illustration of the toy task for a random seed.}
    \label{fig:toy_task_def}
\end{figure}

\clearpage
\newpage
\subsection{Additional Plots for \S~\ref{sec:toy}}
\begin{figure}[h]
    \centering
    \includegraphics[width=0.8\linewidth]{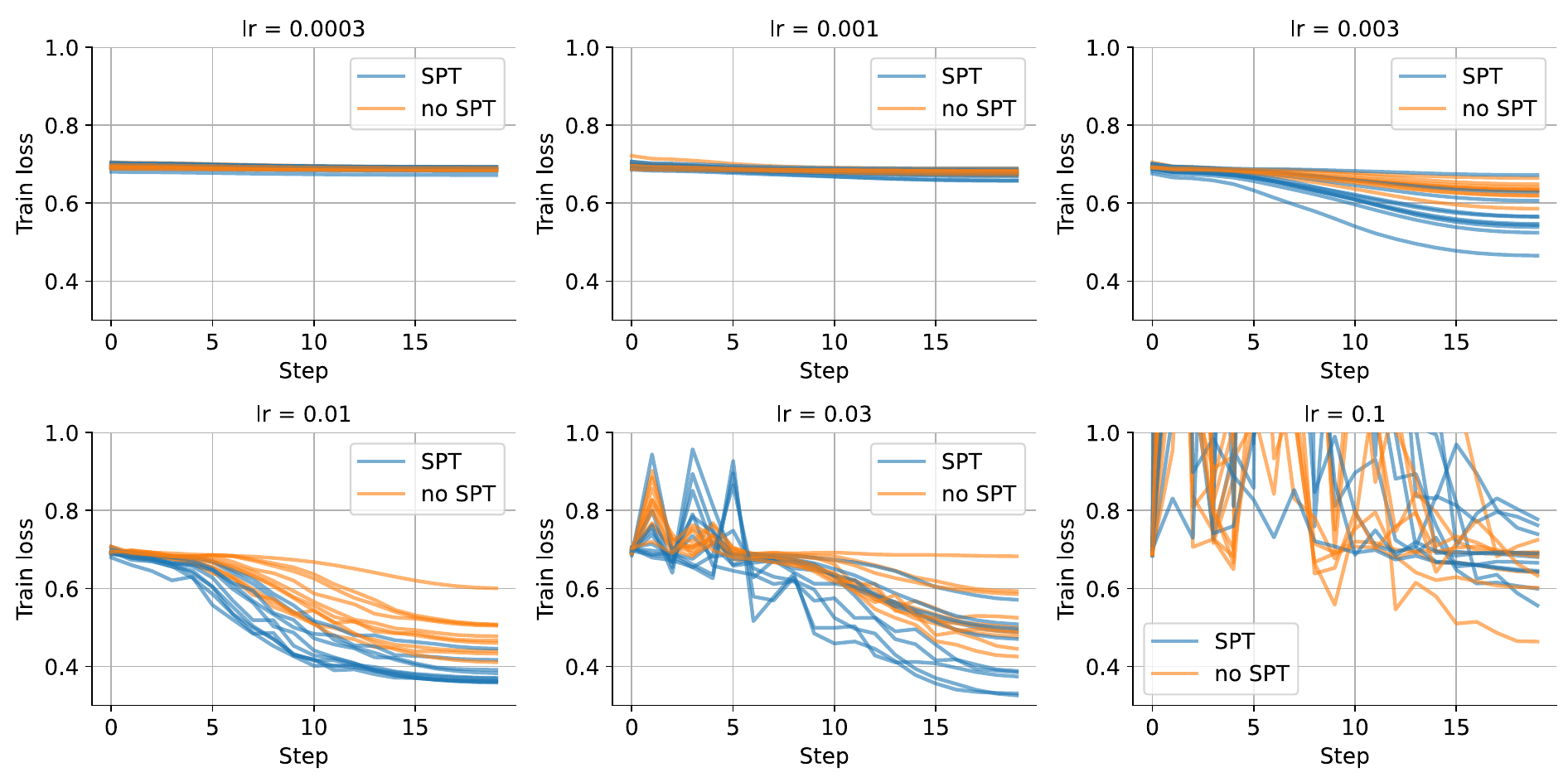}
    \caption{Complementing Fig.~\ref{fig:toy_main_tuning} with loss evolution over time. SPT initialization leads to faster and more stable convergence across seeds, particularly at intermediate learning rates, while differences diminish at very low learning rates and training becomes unstable at high learning rates.}
    \label{fig:toy_seeds}
\end{figure}

\begin{figure}[h]
    \centering
    \includegraphics[width=0.7\linewidth]{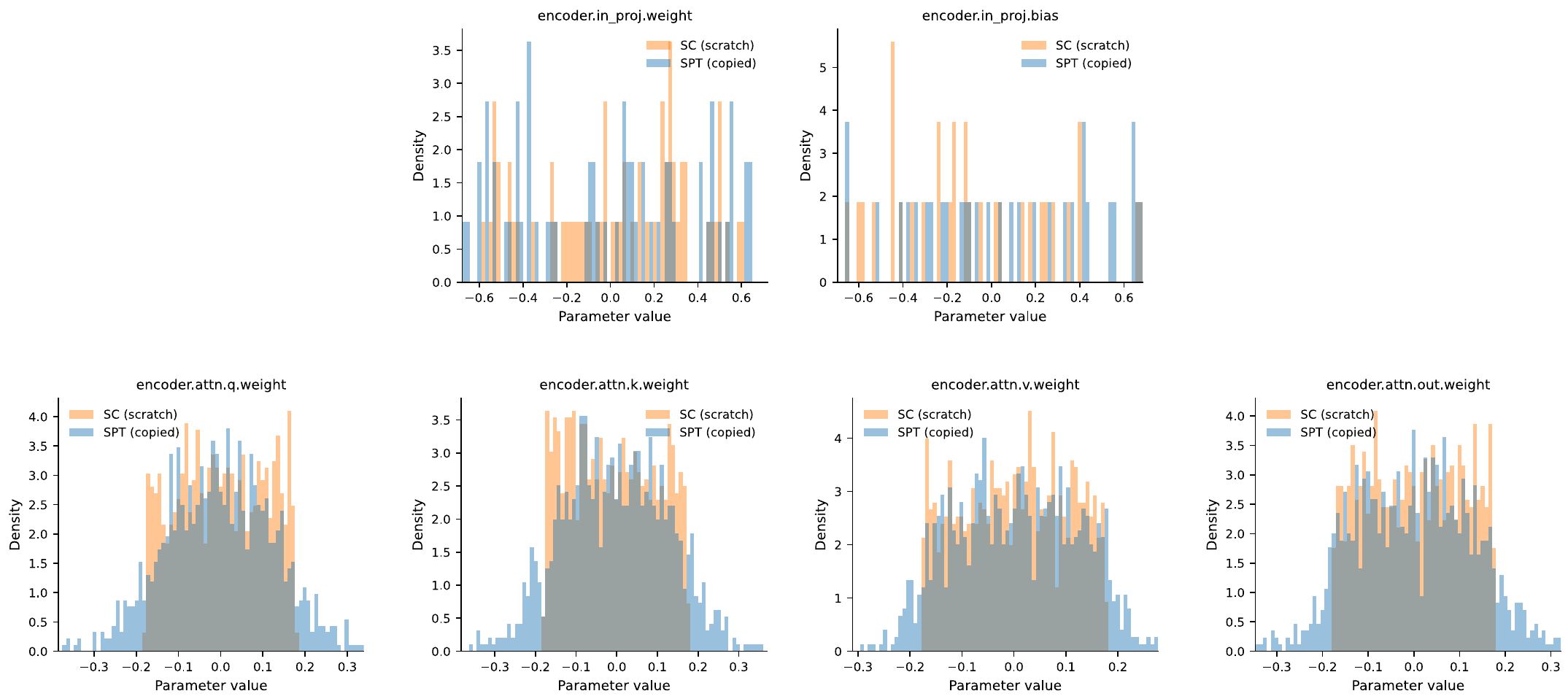}
    \caption{
    Weight distributions after SPT remain close to random initialization, except for mild broadening toward a Gaussian-like shape. Thus, the main structure is not visible in the marginal distributions of $W_Q$ or $W_K$, but in their product and its interaction with positional encodings.
    }
    \label{fig:toy_distributions}
\end{figure}

\begin{figure}[h]
    \centering
    \includegraphics[width=0.5\linewidth]{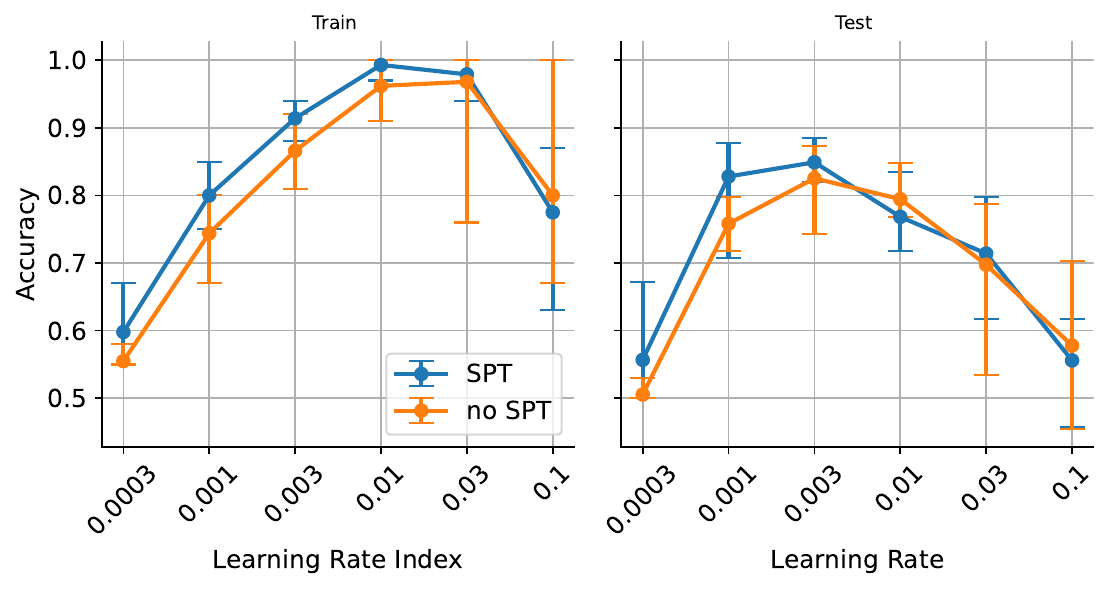}
    \caption{Same setting as Fig.~\ref{fig:toy_main_tuning}, but finetuning is performed for 80 epochs instead of 20. SPT initialization still achieves higher accuracy across most learning rates, although the gap narrows relative to Fig.~\ref{fig:toy_main_tuning}.}
    \label{fig:toy_longer}
\end{figure}

\clearpage
\newpage

\section{Proofs}
\label{sec:proofs}
Here we consider the attention mechanism 
$$
a_{ij}=\frac{\exp(s_{ij})}{\sum_{k=1}^L\exp(s_{ik})},
$$
where $s_{ij}$ are scores that define the attention pattern. For simplicity, we consider them independent of the input. Further, we consider learning a target pattern $s^*_{ij}=\Delta_{ij}$.

We first show in the next result that we can, without loss of generality, assume that $\Delta$ is centered.

\begin{lemma}[Row-centering is without loss of generality]
\label{lemma:rowcenter}
Let $\Delta\in\mathbb R^{L\times L}$ be any score direction, and define its row-centered version
$$
\bar\Delta_{ij}:=\Delta_{ij}-\frac1L\sum_{k=1}^L \Delta_{ik}, \qquad \text{which implies}\qquad \bar\Delta \mathbf 1=0.
$$
For any base scores $b_{ij}$ and any $\alpha\in\mathbb R$, the softmax attention weights generated by
$$
s^\Delta_{ij}(\alpha)=b_{ij}+\alpha \Delta_{ij}
$$
are identical to those generated by
$$
s^{\bar\Delta}_{ij}(\alpha)=b_{ij}+\alpha \bar\Delta_{ij}.
$$
Consequently, any forward pass and any loss depending on the scores only through softmax attention are identical for $\Delta$ and $\bar\Delta$. 
\end{lemma}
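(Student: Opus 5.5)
The argument rests on the shift-invariance of the softmax within each row. First I will note that $\bar\Delta$ is obtained from $\Delta$ by subtracting a row-dependent constant: setting $m_i:=\frac1L\sum_{k=1}^L\Delta_{ik}$ gives $\bar\Delta_{ij}=\Delta_{ij}-m_i$. The claim $\bar\Delta\mathbf 1=0$ then follows by summing over $j$, since $\sum_j\bar\Delta_{ij}=\sum_j\Delta_{ij}-Lm_i=0$. This also matches the matrix form $\bar\Delta=\Delta-\frac1L\Delta\mathbf 1\mathbf 1^\top$ used in Proposition~\ref{prop:theory1}.

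Next, fix $\alpha$ and a row $i$. By construction,
$$s^{\bar\Delta}_{ij}(\alpha)=s^{\Delta}_{ij}(\alpha)-\alpha m_i,$$
and the shift $-\alpha m_i$ does not depend on $j$. Writing out the attention weight, the common factor $\exp(-\alpha m_i)$ appears in both the numerator and in every term of the denominator, so it cancels:
$$\frac{\exp(s^{\Delta}_{ij}(\alpha)-\alpha m_i)}{\sum_k \exp(s^{\Delta}_{ik}(\alpha)-\alpha m_i)}=\frac{\exp(s^{\Delta}_{ij}(\alpha))}{\sum_k\exp(s^{\Delta}_{ik}(\alpha))}.$$
Hence $a^{\bar\Delta}_{ij}(\alpha)=a^{\Delta}_{ij}(\alpha)$ for all $i,j$ and all $\alpha$.

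Finally, the outputs $o_i(\alpha)=\sum_j a_{ij}(\alpha)X_j$ and any loss depending on the scores only through the $a_{ij}$ are functions of identical arguments, so they coincide as functions of $\alpha$. Their derivatives in $\alpha$ therefore agree as well, which is how the lemma will be used for $\mathcal L_{\rm sup}'(0)$ and $\mathcal L_{\rm SPT}'(0)$.

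There is no real obstacle in this proof. The only point needing care is that the shift $\alpha m_i$ depends on the row index $i$ but not on the column index $j$, and that the softmax normalizes over $j$, which is exactly what makes it cancel. For the variant that excludes same-token interactions, the identical argument applies with all sums restricted to $k\neq i$ and $m_i$ redefined as the mean over the $L-1$ off-diagonal entries of row $i$.
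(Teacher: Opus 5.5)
Your proof is correct and follows essentially the same route as the paper's. You verify $\bar\Delta\mathbf 1=0$ by summing over $j$, write $s^{\bar\Delta}_{ij}(\alpha)=s^{\Delta}_{ij}(\alpha)-\alpha m_i$, and cancel the row-constant factor $e^{-\alpha m_i}$ in the softmax. Your extra remark on the diagonal-masked variant is a harmless and correct addition.
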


\begin{proof}
First, we quickly verify that centering implies $\bar\Delta\mathbf 1=0$
$$
\sum_{j=1}^L \bar\Delta_{ij}
=
\sum_{j=1}^L \Delta_{ij}
-
\sum_{j=1}^L \frac1L\sum_{k=1}^L \Delta_{ik}
=
\sum_{j=1}^L \Delta_{ij}
-
\sum_{k=1}^L \Delta_{ik}
=
0.
$$
Next, let $c_i:=\frac1L\sum_{k=1}^L \Delta_{ik}$. Then $\bar\Delta_{ij}=\Delta_{ij}-c_i$, so
$$s^{\bar{\Delta}}_{ij}(\alpha) = b_{ij} + \alpha\bar{\Delta}_{ij} = b_{ij} + \alpha(\Delta_{ij} - c_i) = s^\Delta_{ij}(\alpha) - \alpha c_i$$
For each fixed row $i$, the quantity $\alpha c_i$ is independent of $j$. Therefore
$$
\frac{\exp(s^{\bar\Delta}_{ij}(\alpha))}
{\sum_{k=1}^L \exp(s^{\bar\Delta}_{ik}(\alpha))}
=
\frac{\exp(s^\Delta_{ij}(\alpha)-\alpha c_i)}
{\sum_{k=1}^L \exp(s^\Delta_{ik}(\alpha)-\alpha c_i)}
=
\frac{e^{-\alpha c_i}\exp(s^\Delta_{ij}(\alpha))}
{e^{-\alpha c_i}\sum_{k=1}^L \exp(s^\Delta_{ik}(\alpha))}
=
\frac{\exp(s^\Delta_{ij}(\alpha))}
{\sum_{k=1}^L \exp(s^\Delta_{ik}(\alpha))}.
$$
Thus the attention weights are identical for all $\alpha$, and so are the resulting outputs and losses.
\end{proof}

In the following proposition, we consider optimization of the attention matrix towards a normalized latent score $\Delta$ through a single parameter $\alpha$.

\paragraph{Restated Proposition~\ref{prop:theory1}} Let $X_1,\ldots,X_L\in\mathbb R^d$ be centered random tokens with finite second moments, and define the autocorrelation matrix $C\in\mathbb R^{L\times L}$ as $C_{ij}:=\mathbb E[X_i^\top X_j]$. For a score pattern $\Delta\in\mathbb R^{L\times L}$, consider training a single parameter $\alpha$ controlling the attention score:
$$
s_{ij}(\alpha)=\alpha\Delta_{ij},\qquad
a_{ij}(\alpha)=\frac{\exp(s_{ij}(\alpha))}{\sum_{k=1}^L\exp(s_{ik}(\alpha))},
$$
with outputs $o_i(\alpha)=\sum_j a_{ij}(\alpha)X_j$ and mean-pooled representation $h(\alpha)=L^{-1}\sum_i o_i(\alpha)$. 

Consider the following losses and assume they are regular enough so that one can interchange expectation and differential computation:
$$
\mathcal L_{\rm sup}(\alpha)=\mathbb E[\ell(h(\alpha),Y)]
\qquad\text{and}\qquad
\mathcal L_{\rm SPT}(\alpha)=\frac1{2L}\sum_i\mathbb E[\|X_i-o_i(\alpha)\|^2].
$$
Let $\mathbf 1\in\mathbb R^L$ be the all-ones vector and define $\mathcal B:=\{\Gamma\in\mathbb R^{L\times L}: \Gamma^\top\mathbf 1=0\}$. Further, define $\langle A,B\rangle_F=\sum_{i,j}A_{ij}B_{ij}$ and $\bar \Delta = \Delta-\frac{1}{L}\Delta\mathbf 1\mathbf 1^\top$. For every $\Delta\in\mathcal B$, we have:
$$
\mathcal L_{\rm sup}'(0)=0,
\qquad
\mathcal L_{\rm SPT}'(0)=-\frac1{L^2}\langle \bar \Delta,C\rangle_F 
$$

\begin{proof}
Fix $\Delta\in\mathcal B$. Thanks to Lemma~\ref{lemma:rowcenter}, without loss of generality, we can assume $\Delta\in \tilde{\mathcal B}$
$$
\tilde{\mathcal B}:=\{\Gamma\in\mathbb R^{L\times L}:\Gamma\mathbf 1=0,\ \Gamma^\top\mathbf 1=0\}.
$$
First compute the derivatives of the attention weights. For each fixed row $i$,
$$
a_{ij}(\alpha)=\frac{e^{\alpha\Delta_{ij}}}{\sum_k e^{\alpha\Delta_{ik}}}.
$$
Therefore
$$
a_{ij}'(\alpha)
=
a_{ij}(\alpha)
\left(
\Delta_{ij}-\sum_k a_{ik}(\alpha)\Delta_{ik}
\right).
$$
At $\alpha=0$, $a_{ij}(0)=1/L$, so
$$
a_{ij}'(0)
=
\frac1L
\left(
\Delta_{ij}-\frac1L\sum_k\Delta_{ik}
\right).
$$
Since $\Delta\mathbf 1=0$, this simplifies to
$$
a_{ij}'(0)=\frac{\Delta_{ij}}{L}.
$$

Thus the derivative of the attention output is
$$
o_i'(0)
=
\sum_j a_{ij}'(0)X_j
=
\frac1L\sum_j\Delta_{ij}X_j.
$$
The derivative of the mean-pooled representation is then
$$
h'(0)
=
\frac1L\sum_i o_i'(0)
=
\frac1{L^2}\sum_i\sum_j\Delta_{ij}X_j
=
\frac1{L^2}\sum_j\left(\sum_i\Delta_{ij}\right)X_j.
$$
Since $\Delta^\top\mathbf 1=0$, we have $h'(0)=0$. Hence
$$
\mathcal L_{\rm sup}'(0)
=
\mathbb E[\nabla_h\ell(h(0),Y)^\top h'(0)]
=
0.
$$

Now consider SPT. At $\alpha=0$, all rows attend uniformly, so $o_i(0)=\bar X:=\frac1L\sum_k X_k$. Differentiating,
$$
\mathcal L_{\rm SPT}'(0)
=
\frac1L\sum_i
\mathbb E[(o_i(0)-X_i)^\top o_i'(0)].
$$
Substituting $o_i(0)=\bar X$ and $o_i'(0)=L^{-1}\sum_j\Delta_{ij}X_j$,
$$
\mathcal L_{\rm SPT}'(0)
=
\frac1{L^2}\sum_{i,j}\Delta_{ij}
\mathbb E[(\bar X-X_i)^\top X_j].
$$
Let $S_j:=\sum_k C_{kj}$. Since
$$
\mathbb E[\bar X^\top X_j]
=
\frac1L\sum_k C_{kj}
=
\frac{S_j}{L},
$$
we get
$$
\mathbb E[(\bar X-X_i)^\top X_j]
=
\frac{S_j}{L}-C_{ij}.
$$
Therefore
$$
\mathcal L_{\rm SPT}'(0)
=
\frac1{L^2}\sum_{i,j}\Delta_{ij}
\left(
\frac{S_j}{L}-C_{ij}
\right).
$$
The term involving $S_j$ vanishes because $\sum_i\Delta_{ij}=0$, and the result follows.
\end{proof}

\paragraph{Diagonal-masked variant.} It is possible to show a very similar result for the case where there are token-to-same-token interactions: $a_{ii}(\alpha)=0$ and, for $j\ne i$,
$$
a_{ij}(\alpha)=\frac{e^{\alpha\Delta_{ij}}}{\sum_{k\ne i}e^{\alpha\Delta_{ik}}}.
$$
Let us directly assume we work with doubly-normalized targets: $\Delta_{ii}=0$, $\sum_{j\ne i}\Delta_{ij}=0$, and $\sum_{i\ne j}\Delta_{ij}=0$. Then:
$$
a'_{ij}(0)
=
\frac1{L-1}
\left(
\Delta_{ij}-\frac1{L-1}\sum_{k\ne i}\Delta_{ik}
\right)
=
\frac{\Delta_{ij}}{L-1}
\qquad (j\ne i).
$$
Hence $o_i'(0)=(L-1)^{-1}\sum_{j\ne i}\Delta_{ij}X_j$, and so
$$
h'(0)
=
\frac1{L(L-1)}
\sum_j
\left(\sum_{i\ne j}\Delta_{ij}\right)X_j
=
0.
$$
Thus again we get $\mathcal L_{\rm sup}'(0)=0$. For SPT, $o_i(0)=\bar X_{-i}:=(L-1)^{-1}\sum_{k\ne i}X_k$, so
$$
\mathcal L_{\rm SPT}'(0)
=
\frac1{L(L-1)}
\sum_{i,j\ne i}
\Delta_{ij}
\mathbb E[(\bar X_{-i}-X_i)^\top X_j].
$$
For $j\ne i$, writing $S_j:=\sum_k C_{kj}$,
$$
\mathbb E[(\bar X_{-i}-X_i)^\top X_j]
=
\frac{S_j-C_{ij}}{L-1}-C_{ij}
=
\frac{S_j}{L-1}-\frac{L}{L-1}C_{ij}.
$$
Therefore~(the first term vanishes by off-diagonal column-centering)
$$
\mathcal L_{\rm SPT}'(0)
=
\frac1{L(L-1)^2}
\sum_j S_j\sum_{i\ne j}\Delta_{ij}
-
\frac1{(L-1)^2}
\sum_{i\ne j}\Delta_{ij}C_{ij}
=
-\frac1{(L-1)^2}
\sum_{i\ne j}\Delta_{ij}C_{ij}.
$$

\begin{figure}
    \centering
    \includegraphics[width=0.99\linewidth]{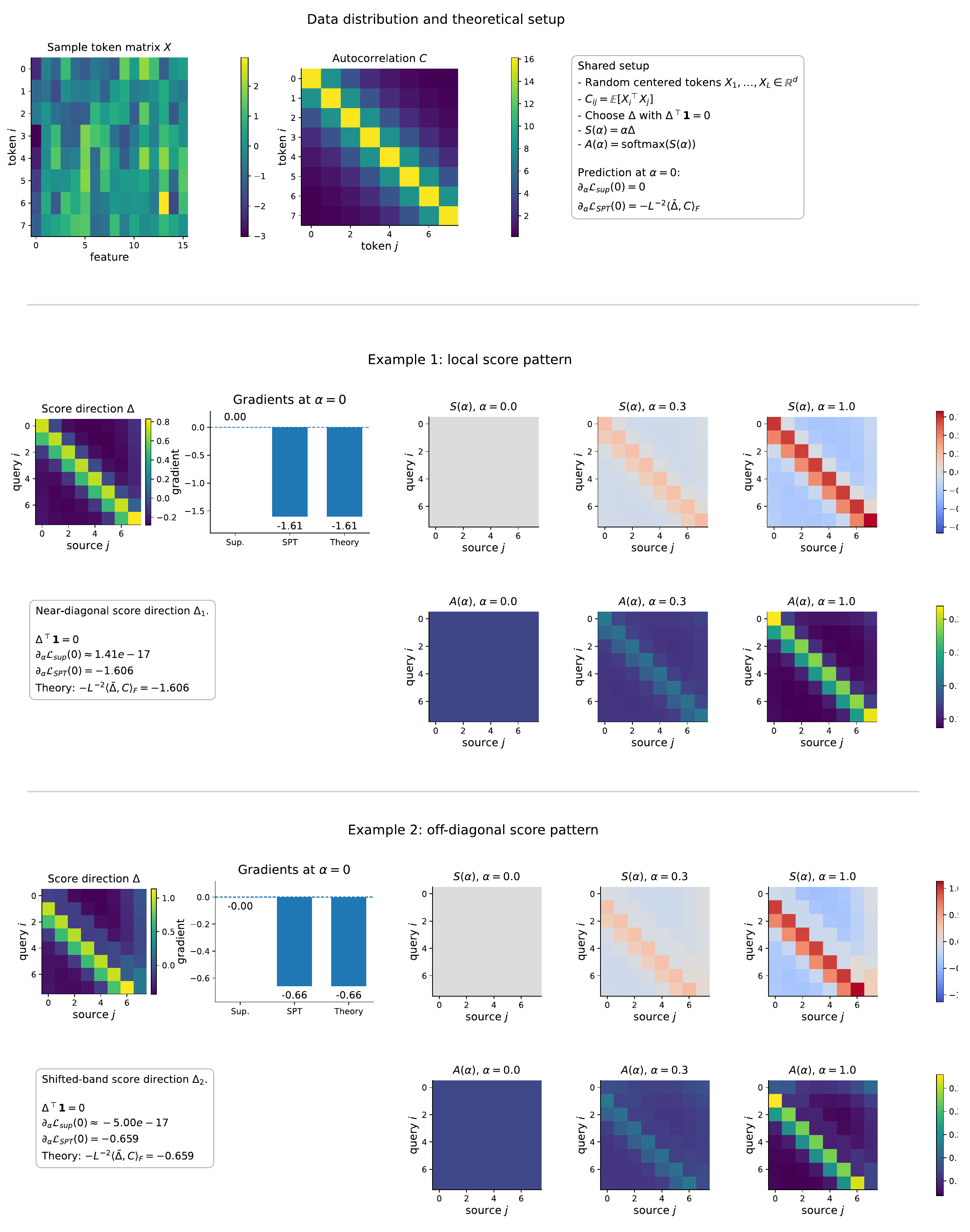}
    \caption{Verification of Proposition~\ref{prop:theory1} on 2 example patterns under randomly sampled tokens.}
    \label{fig:spt_theory}
\end{figure}



\end{document}